\setlist{nosep}
\title{A Competitive Analysis of Online Multi-Agent Path Finding\thanks{This work was supported by the Natural Sciences and Engineering Research Council (NSERC) under grant number RGPIN-2020-06540.}
}
\author{Hang Ma\\}
\theoremstyle{plain}
\newtheorem{thm}{Theorem}
\newtheorem{cor}[thm]{Corollary}
\newtheorem{obs}[thm]{Observation}
\theoremstyle{definition}
\newtheorem{dfn}[thm]{Definition}
\renewenvironment{proof}[1][\proofname]{\par
	\vspace{-.5\topsep}% remove the space after the theorem
	\pushQED{\qed}%
	\normalfont
	\topsep0pt \partopsep0pt % no space before
	\trivlist
	\item[\hskip\labelsep
	\itshape
	#1\@addpunct{.}]\ignorespaces
}{%
	\popQED\endtrivlist\@endpefalse
	\addvspace{6pt plus 6pt} % some space after
}
\newcommand*\rot{\rotatebox[origin=c]{90}}
\DeclareMathOperator*{\argmax}{arg\,max}
\begin{document}
	
	\maketitle
	
	\begin{abstract}
		We study online Multi-Agent Path Finding (MAPF), where new agents are constantly revealed over time and all agents must find collision-free paths to their given goal locations. We generalize existing complexity results of (offline) MAPF to online MAPF. We classify online MAPF algorithms into different categories based on (1) controllability (the set of agents that they can plan paths for at each time) and (2) rationality (the quality of paths they plan) and study the relationships between them. We perform a competitive analysis for each category of online MAPF algorithms with respect to commonly-used objective functions. We show that a naive algorithm that routes newly-revealed agents one at a time in sequence achieves a competitive ratio that is asymptotically bounded from both below and above by the number of agents with respect to flowtime and makespan. We then show a counter-intuitive result that, if rerouting of previously-revealed agents is not allowed, any rational online MAPF algorithms, including ones that plan optimal paths for all newly-revealed agents, have the same asymptotic competitive ratio as the naive algorithm, even on 2D 4-neighbor grids. We also derive constant lower bounds on the competitive ratio of any rational online MAPF algorithms that allow rerouting. The results thus provide theoretical insights into the effectiveness of using MAPF algorithms in an online setting for the first time.
		
	\end{abstract}
	
\section{Introduction}

Online Multi-Agent Path Finding (MAPF) \cite{vsvancara2019online} models the problem of finding collision-free paths for a stream of incoming agents in a given region. Its applications include autonomous intersection management \cite{dresner2008multiagent}, UAV traffic management \cite{ho2019multi}, video games \cite{MaAIIDE17}, and automated warehouse systems \cite{kiva}.
For example, Figure \ref{fig:kiva} shows the typical grid layout of part of a modern automated warehouse, where warehouse robots (orange squares) need to move inventory pods between their storage locations (green cells) and inventory stations (squares in purple and pink).
%For example, Figure \ref{fig:kiva} shows the typical grid layout of part of a modern automated warehouse with inventory stations, each with an entrance (purple cells) and an exit (pink cells), on the left side and storage locations (green cells), each being able to store one inventory pod, on the right side.
%%Each inventory pod consists of a stack of trays, each of which holds bins with products.
%The warehouse robots (orange squares) need to move inventory pods from their storage locations to inventory stations or vice versa. The narrow corridors in the storage region are single-direction lanes,
%%assigned alternate directions in which the robots must follow. where path planning is done by a traffic-rule-based system.
%where traffic is controlled by a traffic-rule-based system.
The narrow corridors in the storage region are single-direction lanes where traffic is controlled by a traffic-rule-based system.
However, paths are not known and must be planned for warehouse robots in the intersection region (red rectangle)
%between the inventory stations and storage locations)
that is often highly congested. Warehouse robots constantly enter this region at their given start cells and must plan collision-free paths to and exit at their given goal cells (along an edge of the rectangle).
The problem in such applications is online because each agent is known only when it is about to enter such a region and the future arrivals of agents are not known ahead of time.

Existing research has conducted empirical evaluations of several online MAPF algorithms \cite{vsvancara2019online} based on recent techniques for (offline) MAPF \cite{SternSOCS19}. However, there is still a lack of theoretical understanding of online MAPF and its algorithms. In this paper, we thus perform a theoretical analysis
%of online MAPF
from the points of view of competitive analysis and complexity theory.

\begin{figure}\centering
	\includegraphics[width=.63\columnwidth]{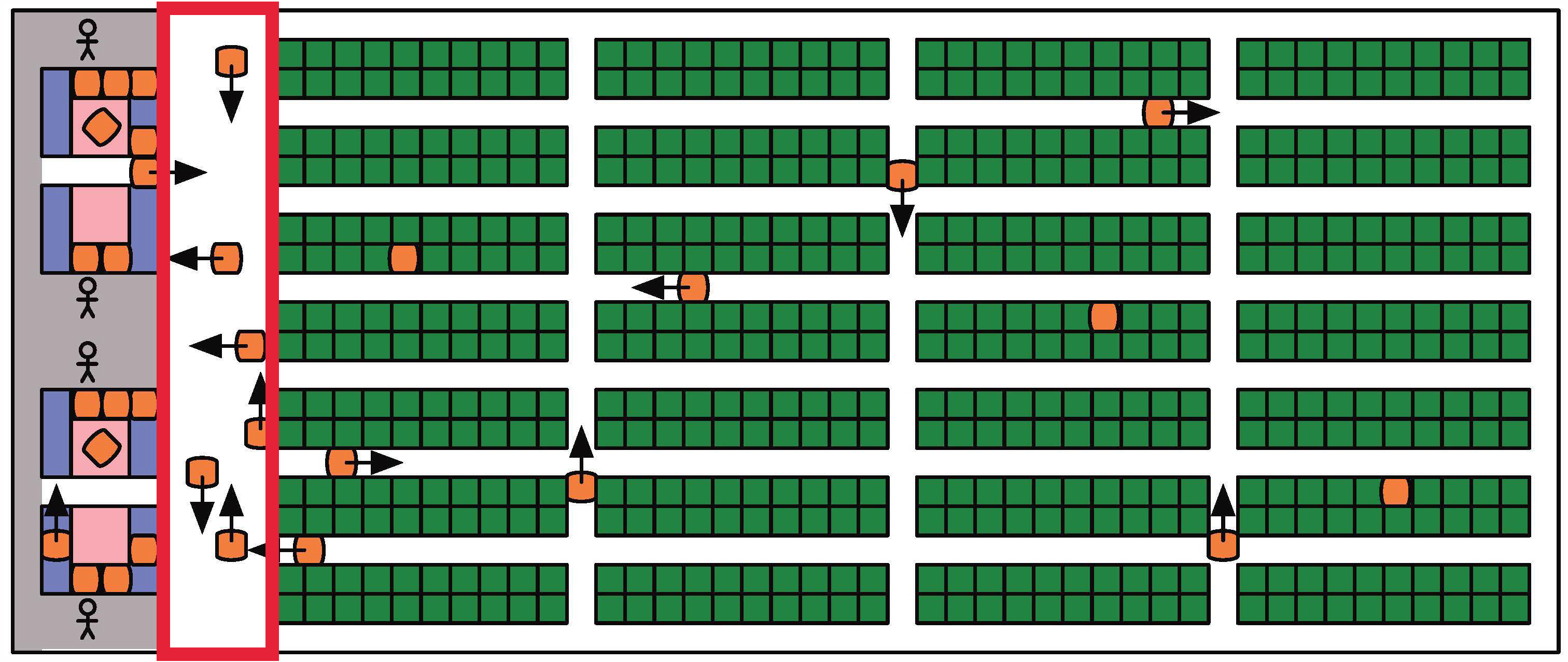}
	\caption{The 2D grid layout of part of an Amazon Robotics automated warehouse, reproduced from \citet{kiva}.}
	\label{fig:kiva}
\end{figure}

\subsection{Related Work}

\noindent\textbf{Offline MAPF:} Online MAPF is an extension of the well-studied problem of (offline) MAPF \cite{MaAIMATTERS17,SternSOCS19}, where all agents are known and start routing at the same time. MAPF is NP-hard to solve optimally for flowtime (the sum of the arrival times of all agents at their goal locations) minimization and to approximate within any constant factor less than 4/3 for makespan (the maximum of the arrival times of all agents at their goal locations) minimization \cite{surynek2010optimization,YuLav13AAAI,MaAAAI16}. It is NP-hard to solve optimally even on planar graphs \cite{yu2015intractability} and 2D 4-neighbor grids \cite{banfi2017intractability}. MAPF algorithms include reductions to other combinatorial problems \cite{YuLav13ICRA,erdem2013general,DBLP:conf/ecai/SurynekFSB16} and specialized algorithms \cite{PushAndSwap,Wang11,DBLP:journals/ai/SharonSGF13,DBLP:journals/ai/SharonSFS15,ICBS,CohenIJCAI18,MaAAAI19a,LiIJCAI19,LiICAPS19,LamBHS19,GangeHS19,LiICAPS20}.

\noindent\textbf{Online Problems:} \citet{MaAAMAS17} and \citet{MaAAAI19b} have considered an online version of MAPF where a given set of agents must attend to a stream of tasks, consisting of (sub-)goal locations to be assigned to the agents, that appear at unknown times. This version considers the entire environment instead of a region of a system and thus does not consider the appearance and disappearance of agents. \citet{vsvancara2019online} and \citet{ho2019multi} have considered another online version of MAPF, similar to the setting of this paper, where a stream of agents with preassigned goal location appear at unknown times. Algorithms for solving such online problems reduce each problem to a sequence of (offline) MAPF sub-problems that are solved by a MAPF algorithm. The effectiveness of these algorithms is characterized by objective functions that measure how soon the tasks are finished or the agents are routed to their goal locations. Existing study on online versions of MAPF has been empirical only. For example, both \citet{MaAAMAS17} and \citet{vsvancara2019online} have experimentally shown that algorithms that allow agents (that have paths already) to replan their paths and reroute tend to be more effective than those that do not. However, there is still a lack of theoretical understanding of solving MAPF in an online setting.

\subsection{Assumptions and Contributions}

We follow most of the notations of \citet{vsvancara2019online} and consider the setting where new agents can wait infinitely long before entering a given region and agents disappear upon exiting from the region because (1) existing online MAPF algorithms have been designed and tested only for this setting \cite{ho2019multi,vsvancara2019online}, although other settings concerning what happens before agents enter the region and after agents leave the region have (only) been mentioned (briefly) by \citet{SternSOCS19,vsvancara2019online} and (2) queuing at entrances and exits of such an intersection region is handled by a task-level planner/scheduler with reserved queuing spaces (for example, queues in inventory stations and along the single-lane corridors in the storage region) in automated warehouses \cite{kiva,kou2020idle} and many other real-world systems.
We view the problem from the point of view of competitive analysis and thus assume that the algorithms have no knowledge of future arrivals of agents, as in the case of all existing online MAPF algorithms \cite{ho2019multi,vsvancara2019online}, although such knowledge might be learned in practice.

As our first contribution, we formalize online MAPF as an extension of (offline) MAPF and demonstrate how to generalize existing NP-hardness and inapproximability results for MAPF to online MAPF. 

As our second contribution, we classify online MAPF algorithms based on (1) different controllability assumptions, namely at what time the system can plan paths for which sets of agents, into three categories: PLAN-NEW-SINGLE that plans only a path for one newly-revealed agent at a time, PLAN-NEW that plans paths only for newly-revealed agents, and PLAN-ALL that plans paths for all known agents and thus allows rerouting and (2) different rationality, namely how effective the planned paths are: optimally-rational algorithms that plan optimal paths for the given set of agents and rational algorithms (which are, in our opinion, the only algorithms worth considering, assuming no knowledge of future arrivals of agents) that plan paths at least asymptotically as effective as the naive baseline algorithm SEQUENCE that routes newly-revealed agents one at a time in sequence. These classifications cover all existing online MAPF algorithms in \citet{vsvancara2019online} and different settings, for example, where rerouting of robots is always allowed \cite{MaAAMAS17} or disallowed \cite{ho2019multi}, in real-world systems.
The relationships between these algorithms are summarized in Figure \ref{fig:relationships}.

As our third contribution, we study online MAPF algorithms under the competitive analysis framework. Specifically, we demonstrate how an arbitrary online MAPF algorithm can be rationalized and show that the competitive ratios of all rational online MAPF algorithms with respect to flowtime and makespan are both bounded from above by $\mathcal O(m)$ for an input sequence of $m$ agents. We then show that (1) the bounds are tight for all rational algorithms in PLAN-NEW-SINGLE and PLAN-NEW, (2) the competitive ratio is at least 4/3 with respect to flowtime and 3/2 with respect to makespan for all rational algorithms in PLAN-ALL, and (3) the competitive ratio is infinite with respect to latency for all rational algorithms. The results hold even for optimally-rational algorithms and on 4-neighbor 2D grids. Therefore, for the first time, we provide theoretical insights into the effectiveness of using MAPF algorithms in an online setting \cite{salzman2020research} and address some of the long-standing open questions such as whether planning for multiple agents is more effective than planning for only one agent at a time in an online setting, whether algorithms that allow rerouting are more effective than those that disallow, and whether acting optimally rationally can improve the effectiveness. The results are summarized in Table \ref{tab:summary}.

\begin{figure}\centering
	\includegraphics[width=.53\columnwidth]{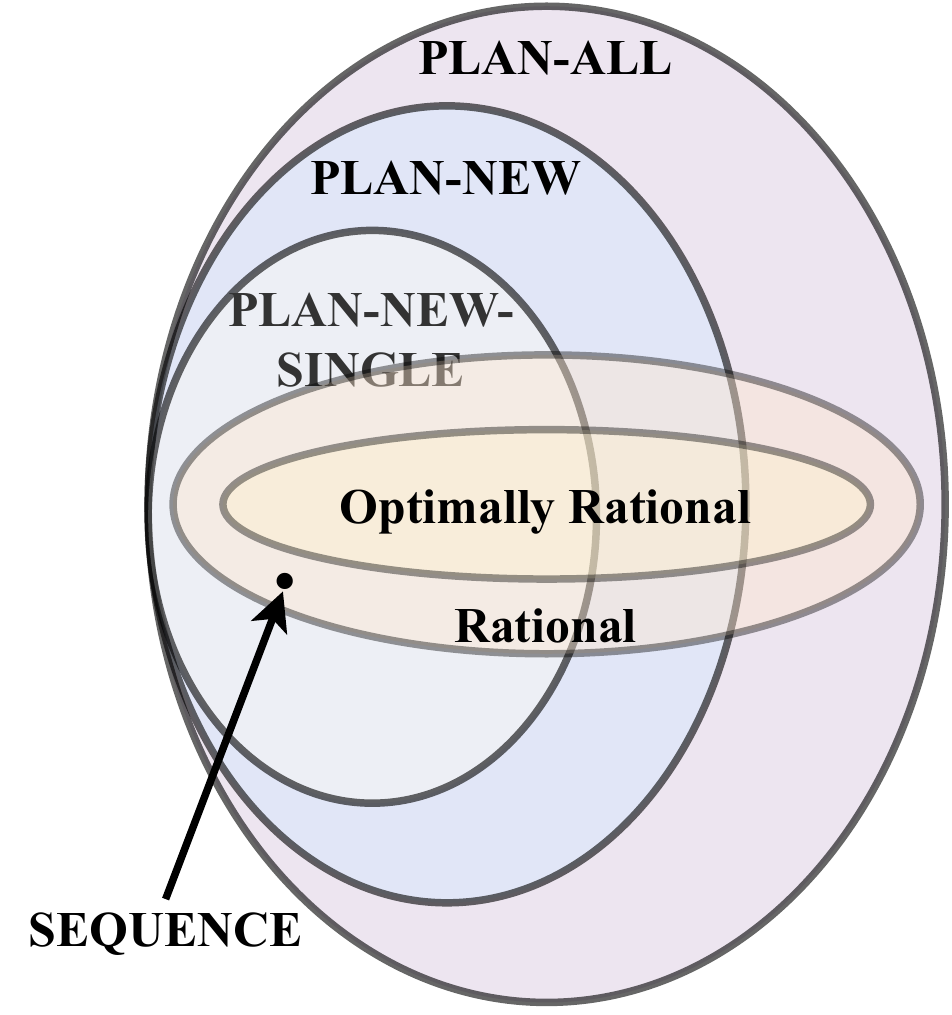}
	\caption{Relationships between online MAPF algorithms.}
	\label{fig:relationships}
\end{figure}

\begin{table}[t]
	\centering
	\Huge
	\resizebox{\columnwidth}{!}{%
%		\scriptsize
%		\begin{tabular}{@{\hskip0pt}c@{\hskip0pt}|@{\hskip0pt}c@{\hskip0pt}||c|c|c|c|c|c|c@{\hskip0pt}}
		\begin{tabular}{c|c||c|c|c|c|c|c|c}
			\hline
			\multicolumn{2}{c||}{Controllability}                                                                                               & \multicolumn{3}{c|}{PLAN-NEW-SINGLE}                                             & \multicolumn{2}{c|}{PLAN-NEW}                                         & \multicolumn{2}{c@{\hskip3pt}}{PLAN-ALL}                                         \\ \cline{1-9}
			\rot{\begin{tabular}[c]{@{}c@{}}Objective\\ Function\end{tabular}} & \rot{\begin{tabular}[c]{@{}c@{}}Competitive\\ Ratio\\ Bounds\end{tabular}} & \rot{SEQUENCE} & \rot{Rational} & \rot{\begin{tabular}[c]{@{}c@{}}Optimally\\ Rational\end{tabular}} & \rot{Rational} & \rot{\begin{tabular}[c]{@{}c@{}}Optimally\\ Rational\end{tabular}} & \rot{Rational} & \rot{\begin{tabular}[c]{@{}c@{}}Optimally\\ Rational\end{tabular}} \\ \hline\hline
			\multirow{3}{*}{\rot{flowtime}}                                    & upper                                                                & $\mathcal O(m)$ [Thm. \ref{thm:sequence_ub_flowtime}]& \multicolumn{6}{c}{$\mathcal O(m)$ [Thm. \ref{thm:flowtime_ub}]}                                                                                                                                                                                           \\ \cline{2-9}
			& \begin{tabular}[c]{@{}c@{}} lower (even \\on 2D grids)\end{tabular}                                                              & \multicolumn{5}{c|}{$\varOmega(m)$ [Thm. \ref{thm:new_flowtime_lb}]}    &  \multicolumn{2}{c}{4/3 [Thm. \ref{thm:all_flowtime_lb}]}                                      \\ \hline
			\multirow{3}{*}{\rot{makespan}}                                    & upper                                                                & $\mathcal O(m)$ [Thm. \ref{thm:sequence_ub_makespan})& \multicolumn{6}{c}{$\mathcal O(m)$ [Thm. \ref{thm:makespan_ub}]}                                                                                                                                                                                                 \\ \cline{2-9}
			& \begin{tabular}[c]{@{}c@{}} lower (even \\on 2D grids)\end{tabular}                                                                  & \multicolumn{5}{c|}{$\varOmega(m)$ [Thm. \ref{thm:new_makespan_lb}]}& \multicolumn{2}{c@{\hskip3pt}}{3/2 [Thm. \ref{thm:all_makespan_lb}]}                                      \\ \hline
			\rot{latency}                                                      
			&  \begin{tabular}[c]{@{}c@{}} (even on\\ 2D grids)\end{tabular}                                                                    & \multicolumn{7}{c@{\hskip3pt}}{$\infty$ [Obs. \ref{obs:latency}]}                                                                                                                                                                                             \\ \hline
		\end{tabular}
	}
	\caption{Summary of main competitiveness results.}\label{tab:summary}
\end{table}

%not understood: 
%
%hint on why snap-shot better?
%
%MAPF: start, goal pairwise different
%
%Online can be the same.
%
%Prove snap-shot optimal is in rational
%
%Draw relation chart
%
%	\cite{MaAAMAS17}
	
\section{Online Multi-Agent Path Finding}\label{sec:definition}

In an online MAPF instance, we are given a connected undirected graph $G = (V, E)$, whose vertices $V$ represent locations and whose edges $E$ represent connections between locations that the agents can traverse. We consider a finite input sequence of $m$ agents $a_1, a_2, \ldots, a_m$.
%although many of our results generalize to infinitely many agents.
Each agent $a_i$ is characterized by a \textit{start vertex} $s_i\in V$, a \textit{goal vertex} $g_i\in V$ that is different from the start vertex, and a non-negative integer \textit{release time} $r_i$, at which the agent is \textit{revealed} (appears and is added to the system) and available for routing. Release times are not known to online MAPF algorithms. Without loss of generality, we assume that the agents are given in non-decreasing order of its release time, that is, $r_1 \leq r_2 \leq \ldots \leq r_m$. Each agent $a_i$ can choose to start at any integer \textit{starting time} $t^{(s)}_i \ge r_i$, where it is added to graph $G$ at its start vertex. At each discrete time step, each agent $a_i$ either moves to an adjacent vertex or waits at the same vertex when it is in graph $G$.
%After the agent arrives at its goal vertex at \textit{arrival time} $t^{(g)}_i$, it is removed from graph $G$ at the next time step $t^{(g)}_i + 1$.
When the agent arrives at its goal vertex at \textit{arrival time} $t^{(g)}_i$, it is removed from graph $G$ (upon its arrival at time step $t^{(g)}_i$).

Let $\pi_i(t)\in V$ denote the vertex of agent $a_i$ at time step $t$. A \emph{path} $\pi_i = \langle \pi_i(t^{(s)}_i), \pi_i(t^{(s)}_i+1), \ldots, \pi_i(t^{(g)}) \rangle$ for agent $a_i$ satisfies the following condition: (1) The agent starts at its start vertex at the starting time $t^{(s)}_i$, that is, $\pi_i(t^{(s)}_i) = s_i$. (2) The agent ends at its goal vertex at the arrival time $t^{(g)}_i$, that is, $\pi_i(t^{(g)}_i) = g_i$ (but is removed from the graph upon its arrival and does not collide with other agents at time step $t^{(g)}_i$). (3) The agent always either moves to an adjacent vertex or waits at the same vertex between two consecutive time steps when in graph $G$, that is, for all time steps $t = t^{(s)},\ldots,t^{(g)}_i$, $(\pi_i(t), \pi_i(t+1)) \in E$ or $\pi_i(t+1) = \pi_i(t)$. Path $\pi$ is implicitly augmented with a null vertex $\bot\notin V$ for all time steps $t\in [0, \infty)\setminus[t^{(s)}_i,t^{(g)}_i]$.
Every two agents should avoid collisions with each other (only) when they are both in graph $G$: A \emph{vertex collision} occurs
%is a tuple $\langle a_i,a_j,v,t \rangle$, where
if two agents occupy the same vertex (except for their goal vertices)
%$v = \pi_i(t) = \pi_j(t) \in V$
at time step $t$. An \emph{edge collision} occurs
%is a tuple $\langle a_i,a_j,u,v,t \rangle$, where 
if two agents traverse the same edge 
%$(u,v) \in E$, where $u = \pi_i(t) = \pi_j (t+1)$ and $v = \pi_j (t) = \pi_i(t+1)$,
in opposite directions between time steps $t$ and $t+1$.

A \textit{plan} for a set $\mathcal A$ of agents consists of a path $\pi_i$ assigned to each agent $a_i\in \mathcal A$. Let $\mathit{dist}_i$ denote the length of the shortest path (optimal path cost) from vertex $s_i$ to vertex $g_i$ in graph $G$. The \textit{service time} $t^{(g)}_i - r_i$ of agent $a_i$ is the number of time steps for the agent to arrive at its goal vertex since it is revealed.
We consider three common objective functions:
\begin{enumerate}
	%[(1)]
	\item The \textit{flowtime} $\sum_{a_i\in\mathcal A} (t^{(g)}_i - r_i)$ is the sum of the service times $t^{(g)}_i - r_i$ of all agents in $\mathcal A$.
	\item The \textit{makespan} $\max_{a_i\in\mathcal A} t^{(g)}_i$ is the maximum of the arrival times of all agents in $\mathcal A$.\footnote{\citet{vsvancara2019online} claims that the makespan measure is problematic (probably because an infinite sequence of agents was considered there) and does not consider it. However, it is regarded as an important measure in the literature \cite{surynek2010optimization,YuLav13ICRA}, which corresponds to the earliest time when all (finitely many) transportation requests are served in practice.}
	\item The \textit{latency} $\sum_{a_i\in\mathcal A} (t^{(g)}_i - r_i - \mathit{dist}_i)$ is the sum of the differences between the service times $t^{(g)}_i - r_i$ and the distances $\mathit{dist}_i$ of all agents in $\mathcal A$.\footnote{The latency measure has been used in the multi-vehicle transportation research, for example, in the context of online ride and delivery services \cite{das2018minimizing}, and was first proposed by \citet{vsvancara2019online} for online MAPF.}
\end{enumerate}
Trivially, an online MAPF plan minimizes the flowtime if and only if it minimizes the latency, which can be rewritten as $\sum_{a_i\in\mathcal A} (t^{(g)}_i - r_i) - \sum_{a_i\in\mathcal A} \mathit{dist}_i$.
%It thus suffices to discuss only the first two objective functions.

An online MAPF \textit{solution} is a plan for all agents $a_1, \ldots, a_m$ whose paths are collision-free.

\subsection{Computational Complexity}\label{sec:complexity}

We show that online MAPF is NP-hard to solve (bounded-sub)optimally in general even with offline algorithms that know all agents a priori. Similar to  \citet{MaAAAI16,MaAAMAS18}, we use a reduction from an NP-complete version of the Boolean satifiability problem, called $\le$3,$=$3-SAT \cite{cat1984}.
A $\le$3,$=$3-SAT instance consists of $N$ Boolean variables and $M$ disjunctive clauses where each variable appears in exactly three clauses, uncomplemented at least once and complemented at least once, and each clause contains at most three literals. Its decision question asks whether there exists a satisfying assignment.

\begin{thm}\label{thm:makespan_hardness}
	For any $\epsilon > 0$, it is NP-hard to find a $4/3 - \epsilon$-approximate solution to online MAPF for makespan minimization, even if all agents are known a priori.
\end{thm}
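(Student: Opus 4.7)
The plan is to reduce from the corresponding offline MAPF inapproximability result, exploiting the fact that online MAPF with all release times set to zero is precisely the offline MAPF problem. Specifically, Ma et al.\ (AAAI'16) already establishes, via a gap-preserving reduction from $\le$3,$=$3-SAT, that approximating the makespan of (offline) MAPF within any factor strictly less than $4/3$ is NP-hard; the clause here ``even if all agents are known a priori'' is a hint that we need nothing more than lifting that result into the online setting.

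First I would make the embedding explicit. Given any offline MAPF instance $(G, \{(s_i, g_i)\}_{i=1}^m)$, construct the online MAPF instance on the same graph $G$ with the same start/goal pairs, and set $r_i = 0$ for all $i$. Since all release times are $0$, we have $t^{(g)}_i - r_i = t^{(g)}_i$, so the corrected definition of makespan, $\max_{a_i \in \mathcal{A}} t^{(g)}_i$, coincides with the offline makespan, and any starting time constraint $t^{(s)}_i \geq r_i = 0$ is trivially satisfied; thus a plan is feasible for the online instance if and only if the same plan is feasible for the offline instance, with identical makespan. Consequently, an $\alpha$-approximation algorithm for online MAPF (makespan) yields an $\alpha$-approximation algorithm for offline MAPF (makespan) on this restricted family of instances.

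Next I would invoke the offline inapproximability result. The reduction in Ma et al.\ (AAAI'16) builds from any $\le$3,$=$3-SAT instance $\varphi$ a MAPF instance such that: (i) if $\varphi$ is satisfiable, a plan of makespan at most $3T$ exists, realized by variable-agents choosing routes corresponding to a satisfying assignment; and (ii) if $\varphi$ is unsatisfiable, any plan must have makespan at least $4T$, because at least one clause-gadget forces an additional detour of $T$ time steps for one of the agents traversing it. Combined with the embedding, this gap carries over verbatim: distinguishing online-MAPF makespan $\le 3T$ from $\ge 4T$ is NP-hard, ruling out any polynomial-time $(4/3 - \epsilon)$-approximation.

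The main obstacle is not re-deriving the clause/variable gadget analysis (which we can cite); rather, it is being careful that \emph{every} definitional difference between offline and online MAPF---in particular the discretionary starting time $t^{(s)}_i \ge r_i$, the corrected makespan formula, and the disappear-upon-arrival convention---does not inadvertently loosen the lower bound. Since the disappear-at-goal convention is also used by the offline construction, and setting $r_i = 0$ makes the starting-time discretion moot (no algorithm gains by delaying an agent's entry when only $\max t^{(g)}_i$ is minimized), these concerns dissolve and the $4/3 - \epsilon$ hardness transfers intact.
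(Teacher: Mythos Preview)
Your proposal is correct and follows essentially the same route as the paper: both set all release times to zero and invoke the Ma et al.\ (AAAI'16) $\le$3,$=$3-SAT reduction, with the paper reproducing the gadget construction explicitly in its appendix while you cite it and concentrate on arguing that the online-specific conventions do not disturb the gap. Two small corrections: the gap in that construction is makespan $3$ versus $4$ (not $3T$ versus $4T$), and standard offline MAPF does \emph{not} use the disappear-at-goal convention, so your last paragraph should instead argue directly that, because every start-to-goal path in the gadget has length exactly three and the goal vertices are distinct sinks, neither delayed entry nor removal upon arrival can help any agent reach its goal before time step~$3$ or avoid the collisions that force makespan $\ge 4$ in the unsatisfiable case.
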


%\begin{proof}[Proof Sketch]
%	Given a $\le$3,$=$3-SAT instance, we use a polynomial-time reduction similar to the one in \citet{MaAAAI16} to construct an online MAPF instance with $m=M+2N$ agents, all with release time 0, that has a solution with makespan three if and only if the $\le$3,$=$3-SAT instance is satisfiable and always has a solution with makespan four, even if the $\le$3,$=$3-SAT instance is unsatisfiable. For any $\epsilon>0$, any (offline or online) algorithm for online MAPF with approximation ratio $4/3 - \epsilon$ thus computes a solution with makespan three whenever the $\le$3,$=$3-SAT instance is satisfiable and thus solves $\le$3,$=$3-SAT.
%\end{proof}

The complete proof is given in the appendix.
In the proof of Theorem \ref{thm:makespan_hardness}, the online MAPF instance reduced from the given $\le$3,$=$3-SAT instance has the property that the
length of every path from the start vertex to the goal vertex of every agent is at least three. Therefore, if the makespan is three, then every agent arrives at its goal vertex in exactly three time steps and the flowtime is $3m$. Moreover, if the makespan exceeds three, then the flowtime exceeds $3m$, yielding the following corollary.

\begin{cor} \label{cor:flowtime_hardness}
	It is NP-hard to find an optimal solution to online MAPF for flowtime minimization, even if all agents are known a priori.
\end{cor}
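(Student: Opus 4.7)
The plan is to reuse verbatim the reduction from $\le$3,$=$3-SAT constructed in the proof of Theorem \ref{thm:makespan_hardness} and to extract flowtime hardness from two structural facts about that reduction that are already noted in the paragraph preceding the corollary.

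First I would record those two facts: (i) every shortest path from $s_i$ to $g_i$ in the reduced instance has length at least three, so $t^{(g)}_i - r_i \ge t^{(g)}_i - t^{(s)}_i \ge \mathit{dist}_i \ge 3$ for every agent and hence every solution has flowtime at least $3m$; and (ii) the underlying $\le$3,$=$3-SAT instance $\varphi$ is satisfiable if and only if the reduced online MAPF instance admits a solution of makespan three.

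Next I would lift the makespan dichotomy in (ii) to a flowtime dichotomy. If $\varphi$ is satisfiable, the makespan-three solution from (ii) has every agent arriving exactly three time steps after its release, so its flowtime equals $3m$ and matches the lower bound in (i), making it optimal. If $\varphi$ is unsatisfiable, every solution has makespan at least four; since in the reduction all agents share a common release time, the agent $a_{i^\star}$ realizing the maximum arrival time has service time at least four while every other agent still has service time at least three, so the flowtime is at least $3m+1$. Consequently the optimal flowtime equals $3m$ if and only if $\varphi$ is satisfiable, and any polynomial-time algorithm that computes the optimal flowtime would decide $\le$3,$=$3-SAT.

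The main thing to be careful about is not really an obstacle but a bookkeeping check: the implication ``makespan $>3\Rightarrow$ flowtime $>3m$'' relies on the release times in the reduced instance being arranged so that the service time of the makespan-binding agent grows in lockstep with its arrival time. Since the text immediately preceding the corollary already asserts exactly this property, the full argument collapses to a one-line consequence of Theorem \ref{thm:makespan_hardness}.
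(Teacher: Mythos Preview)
Your proposal is correct and follows essentially the same argument as the paper: both reuse the reduction from Theorem~\ref{thm:makespan_hardness}, observe that every agent's shortest path has length at least three so the flowtime is at least $3m$, and then argue that the makespan dichotomy (three versus at least four) lifts to a flowtime dichotomy ($3m$ versus strictly more), yielding NP-hardness. Your write-up is slightly more explicit about why the lift works (all release times are zero, so the makespan-binding agent contributes service time at least four), but this is exactly the content of the sentence preceding the corollary in the paper.
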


In the proof of Theorem \ref{thm:makespan_hardness}, the constructed online MAPF instance has a solution with zero latency if and only if the given $\le$3,$=$3-SAT instance is satisfiable. Any algorithm for online MAPF with any constant approximation ratio $c$ thus computes a solution with zero latency whenever the given $\le$3,$=$3-SAT instance is satisfiable and thus solves $\le$3,$=$3-SAT, yielding the following corollary.

\begin{cor} \label{cor:latency_hardness}
	For any $c>0$, it is NP-hard to find a $c$-approximate solution to online MAPF for latency minimization, even if all agents are known a priori.
\end{cor}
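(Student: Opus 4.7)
The plan is to leverage the same gap reduction used in Theorem~\ref{thm:makespan_hardness} and exploit the fact that latency, being a sum of nonnegative integer terms $t^{(g)}_i - r_i - \mathit{dist}_i$, is zero exactly when every agent arrives at its goal along a shortest path with no waiting after being released. The paragraph preceding the corollary already asserts that the online MAPF instance constructed from a $\le$3,$=$3-SAT instance admits a zero-latency solution if and only if the SAT instance is satisfiable, so I would first explicitly invoke this property: in the satisfiable case, the construction yields paths of length $\mathit{dist}_i$ with $t^{(s)}_i = r_i$ for every agent, giving latency $0$; in the unsatisfiable case, at least one agent must either wait or deviate from a shortest path, forcing latency $\ge 1$ because the quantities are integers.

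Next, I would set up a contradiction-style argument. Suppose, for some constant $c > 0$, there were a polynomial-time algorithm $\mathcal{A}$ that always returns a $c$-approximate solution to online MAPF for latency minimization (even when all agents are known a priori). Given any $\le$3,$=$3-SAT instance, apply the reduction from Theorem~\ref{thm:makespan_hardness} to obtain an online MAPF instance in polynomial time, then run $\mathcal{A}$ on it. In the satisfiable case, the optimum is $0$, so $\mathcal{A}$ must output a solution of value at most $c \cdot 0 = 0$, i.e., latency exactly $0$; in the unsatisfiable case, no solution has latency below $1$, so $\mathcal{A}$'s output certainly has latency at least $1$. Inspecting the latency of $\mathcal{A}$'s output therefore decides $\le$3,$=$3-SAT in polynomial time, contradicting the NP-completeness of $\le$3,$=$3-SAT.

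The main conceptual obstacle is subtle rather than technical: it is the standard ``gap at zero'' issue for multiplicative approximation ratios. Because the optimal value can be $0$, any multiplicative $c$-approximation collapses to requiring an exact match in the satisfiable case, which is what makes the hardness independent of $c$. I would make this explicit so the reader sees why the argument works uniformly for \emph{every} $c > 0$, including $c$ very large, in contrast with the $4/3 - \epsilon$ bound of Theorem~\ref{thm:makespan_hardness} where the positive optimum prevents such collapse. Beyond this observation, no additional construction is needed; the corollary follows directly from the integrality and nonnegativity of latency together with the zero-gap property already built into the reduction.
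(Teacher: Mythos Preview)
Your proposal is correct and follows essentially the same approach as the paper: the paper's justification (the paragraph immediately preceding the corollary) also observes that the reduction from Theorem~\ref{thm:makespan_hardness} yields an instance with zero latency iff the $\le$3,$=$3-SAT instance is satisfiable, so any $c$-approximation must return a zero-latency solution in the satisfiable case and thereby decides $\le$3,$=$3-SAT. Your write-up simply makes the ``gap at zero'' reasoning and the integrality of latency more explicit than the paper does.
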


\section{Online MAPF Algorithms}\label{sec:algorithm}

Online MAPF algorithms use the following assumption: For any given MAPF instance, the agents are partitioned into $K$ disjoint sets $\mathcal A_1, \ldots, \mathcal A_K$ based on their release times (a total of $K$ different values), where agents in each set $\mathcal A_k$ have the same release time $r_{\mathcal A_k}$ for all $k\in [K]$ (where $[K] = \{1, \ldots, K\}$) and the sets $\mathcal A_k$ are indexed in increasing order of $r_{\mathcal A_k}$. Let $\mathcal A_{\leq k} = \bigcup_{k'\in [k]}\mathcal A_{k'}$ denote the set of all revealed agents by release time $r_{\mathcal A_k}$, for all $k\in [K]$, and $\mathcal A_{< k} = \bigcup_{k' \in [k-1]}\mathcal A_{k'}$ the set of all previously-revealed agents at each release time $r_{\mathcal A_k}$, for all $k\in [2, K]$.
At each
%release time
$r_{\mathcal A_k}$, an online MAPF algorithm calls an offline path-finding algorithm to plan paths so that all revealed agents have paths.

\subsection{Controllability}
We categorize online MAPF algorithms based on different \textit{controllability assumptions} about for which \textit{controllable set} $\mathcal A^\mathcal C$ of agents they can plan paths at each time when they call an offline path-finding algorithm. Algorithm \ref{alg:onlineMAPF} shows the pseudo-code of a template of online MAPF algorithms. At each release time $r_{\mathcal A_k}$, an online MAPF algorithm considers each controllable set of agents [Line \ref{line:each_set}]. It then treats all agents that have an already planned path and are not in the controllable set as \textit{dynamic obstacles} that follow their already planned paths [Line \ref{line:obstacles}] and computes a plan for all agents in the controllable set [Line \ref{line:plan}]. We highlight the following three controllability categories.
\begin{enumerate}
	%[(1)]
	\item\relax[$\mathcal A^\mathcal C = \{a_i\}$, $\forall a_i \in \mathcal A_k$] Algorithms in \textbf{PLAN-NEW-SINGLE} call an offline single-agent path-finding algorithm to plan a path for one (newly-revealed) agent $a_i$ in $\mathcal A_k$ at a time in increasing order of the indices of the agents and treat all agents $a_j$ with
	%indices smaller than it, namely
	$j<i$ as dynamic obstacles, until all agents in $\mathcal A_k$ have paths.
	\item\relax[$\mathcal A^\mathcal C = \mathcal A_k$] Algorithms in \textbf{PLAN-NEW} call an offline MAPF algorithm to plan paths for all newly-revealed agents, namely agents in $\mathcal A_k$, and treat all previously-revealed agents as dynamic obstacles.
	\item\relax[$\mathcal A^\mathcal C = \mathcal A_{\leq k}$] Algorithms in \textbf{PLAN-ALL} call an offline MAPF algorithm to plan paths for all revealed agents from time step $r_{\mathcal A_k}$ on, thus allowing previously-revealed agents to change their paths from time step $r_{\mathcal A_k}$ on.
\end{enumerate}
Therefore, online MAPF algorithms in the same category differ from each other only in the offline single/multi-agent path-finding algorithm they used to solve each single/multi-agent path-finding problem. 
Trivially,
%$\text{PLAN-NEW-SINGLE}\subseteq\text{PLAN-NEW}\subseteq\text{PLAN-ALL}$, namely
PLAN-NEW-SINGLE is a subset of PLAN-NEW, which, in turn, is a subset of PLAN-ALL, since any online MAPF algorithm in PLAN-NEW-SINGLE can be viewed as a special case of PLAN-NEW that plans a path for each newly-revealed agent in sequence and any online MAPF algorithm in PLAN-NEW can be viewed as a special case of PLAN-ALL that always plans the same paths as the already planned ones for all previously-revealed agents.

In practice, different controllability assumptions correspond to different types of real-world systems: Algorithms in PLAN-NEW-SINGLE can easily be adapted to most real-time distributed systems since agents make decisions individually and require less communication with each other. PLAN-NEW corresponds to centralized systems where agents cannot be rerouted easily, for example, automated parcel sortation centers with agents moving at high speeds \cite{kou2020idle}. PLAN-ALL corresponds to centralized systems that allow frequent rerouting, for example, a recent proposal of automated train (re-)scheduling systems by the Swiss Federal Railways \cite{flatland2020,LiICAPS21}.
Controllability often depends on specific applications and might also be viewed as part of the problem definition.

\begin{algorithm}[t]	
	\caption{Online MAPF Algorithm Template}\label{alg:onlineMAPF}
	\KwIn{online MAPF instance}
	\tcc{system executes at release time $r_{\mathcal A_k}$}
	\ForEach{controllable set $\mathcal A^\mathcal C$\label{line:each_set}}
	{
		$\bar{\mathcal A} \gets \{a_i|a_i\text{ has a path \textbf{and} } a_i\notin A^\mathcal C\}$\label{line:obstacles}\;
		Compute a plan for $\mathcal A^\mathcal C$ that treats $\bar{\mathcal A}$ as dynamic obstacles\label{line:plan}\;
	}
	\tcc{system advances to the next release time}
\end{algorithm}

%We also define a category \textbf{OFFLINE} that consists of all offline algorithms, which have full controllability of agents $a_1, \ldots, a_m$ at time step 0.

\subsection{Optimal Rationality}

We now discuss one category of online MAPF algorithms based on the quality of paths the algorithms plan for the controllable set of agents at each time. This categorization is orthogonal to the controllability assumptions and more algorithmic than application-specific.

Recent research \cite{vsvancara2019online} has shown that ``snapshot-optimal'' online MAPF algorithms in PLAN-ALL that (use an optimal offline MAPF algorithm to) compute a plan for all revealed agents with the smallest flowtime at each release time tend to result in an online MAPF solution with small flowtime. We generalize this notion of optimality to algorithms in all the above three controllability categories.
\begin{dfn}[Optimal Rationality]
	A plan for a given controllable set $\mathcal A^\mathcal C$ of agents at a given release time is \textit{optimally-rational} if and only if it results in a plan for all planned agents (agents that have paths) with the smallest cost with respect to a given objective function (under a given controllability assumption).
	An online MAPF algorithm is \textit{optimally-rational} if and only if it computes an optimally-rational plan for the controllable set of agents at each time when it calls an offline path-finding algorithm.
\end{dfn}

Trivially, an online MAPF algorithm is optimally-rational with respect to flowtime if and only if it is optimally-rational with respect to latency. We now give examples of offline path-finding algorithms that can be used in optimally-rational online MAPF algorithms under different controllability assumptions with respect to flowtime and makespan.

\noindent\textbf{Examples of Optimally-Rational Algorithms:}
Optimally-rational algorithms in PLAN-NEW-SINGLE with respect to flowtime and makespan can both call Space-Time A* \cite{WHCA} or Safe Interval Path Planning (SIPP) \cite{SIPP} at release time $r_{\mathcal A_k}$ to find a path for each agent $a_i\in \mathcal A_k$ with the smallest arrival time $t^{(g)}_i$, which are similar in essence to prioritized offline MAPF algorithms (for example, Cooperative A*\cite{WHCA}). 
%Alternatively, optimally-rational algorithms in PLAN-NEW-SINGLE with respect to makespan can also call a modified version of Space-Time A* or SIPP at release time $r_{\mathcal A_k}$ to find a path for each agent $a_i\in \mathcal A_k$ with the smallest cost with respect to a custom cascading cost function where any path with arrival time $t^{(g)}_i$ no larger than the makespan of the plan for agents whose paths have been planned, namely $\{a_j|j<i\}$, has cost equal to the makespan and, otherwise, has cost equal to the arrival time $t^{(g)}_i$, which still results in a plan for all agents whose paths have been planned with the smallest makespan.
Optimally-rational algorithms in PLAN-NEW (respectively, PLAN-ALL) with respect to flowtime can call any optimal offline MAPF algorithm at release time $r_{\mathcal A_k}$ to find a plan for all agents in $\mathcal A_k$ (respectively, $\mathcal A_{\leq k}$) with the smallest flowtime. Optimally-rational algorithms in PLAN-NEW (respectively, PLAN-ALL) with respect to makespan can call any optimal offline MAPF algorithm at release time $r_{\mathcal A_k}$ to find a plan for all agents in $\mathcal A_k$ (respectively, $\mathcal A_{\leq k}$) with the smallest makespan. Note that, unlike the above examples,
optimally-rational algorithms with respect to makespan can alternatively find a plan for agents in $\mathcal A^\mathcal C$ that does not necessarily have the smallest makespan but still results in a plan for all planned agents with the smallest makespan (and is thus optimally-rational).
%Alternatively, like PLAN-NEW-SINGLE, optimally-rational algorithms in PLAN-NEW with respect to makespan can also call a modified offline MAPF algorithm at release time $r_{\mathcal A_k}$ to find a plan for all agents in $\mathcal A_k$, not necessarily with the smallest makespan, that results in a plan for all agents in $\mathcal A_{\leq k}$ with the smallest makespan.

\section{Feasibility and SEQUENCE}

We now show that, unlike (offline) MAPF (in which agents are not removed
%from the graph
upon arrival at goal vertices), all online MAPF instances are solvable.
We prove the statement by describing the following naive online MAPF algorithm \textbf{SEQUENCE} that routes agents one after another in sequence.

SEQUENCE plans a path for one agent at a time in increasing order of the indices of the agents (thus also non-decreasing order of the release times of the agents). Specifically, it first plans a path $\pi_1$ for agent $a_1$ at time step $r_1$, when the agent is revealed, such that the agent starts from vertex $s_1$ at time step $t^{(s)}_1 = r_1$, moves along the shortest path from vertex $s_1$ to vertex $g_1$ in graph $G$, and arrives at vertex $g_1$ at time step $t^{(g)}_1 = t^{(s)}_1 + \mathit{dist}_1$. Then, for each $i=2,\ldots,m$, it plans a path $\pi_i$ for agent $a_i$ at time step $r_i$ such that the agent starts at time step $t^{(s)}_{i} = \max(r_i, t^{(g)}_{i-1})$, moves along the shortest path from vertex $s_i$ to vertex $g_i$ in graph $G$, and arrives at vertex $g_i$ at time step $t^{(g)}_i = t^{(s)}_i + \mathit{dist}_i$. In other words, each such agent starts routing only when the previous agent has finished routing and been removed.
% from the graph.

\begin{obs}
	All online MAPF instances are solvable, and SEQUENCE solves them.
\end{obs}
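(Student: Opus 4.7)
The plan is to verify directly that the schedule produced by SEQUENCE is a well-defined online MAPF solution, which simultaneously establishes feasibility of every instance. Since the argument is constructive rather than contradictory, the structure will mirror the three path conditions in Section \ref{sec:definition} together with the collision-freeness requirement.

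First I would record that, because $G$ is connected and $s_i\neq g_i$, a shortest path from $s_i$ to $g_i$ exists for every agent $a_i$, so $\mathit{dist}_i$ is a finite positive integer and the path $\pi_i$ prescribed by SEQUENCE is well-defined. Next I would check the starting-time constraint: by construction $t^{(s)}_1 = r_1$ and $t^{(s)}_i = \max(r_i, t^{(g)}_{i-1}) \ge r_i$ for $i\ge 2$, so no agent starts before being revealed. Conditions (1)--(3) of the path definition then follow immediately: $\pi_i$ begins at $s_i$ at time $t^{(s)}_i$, ends at $g_i$ at time $t^{(g)}_i = t^{(s)}_i + \mathit{dist}_i$, and at every intermediate step moves along an edge of $G$ because it traces a shortest path.

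The key step is showing collision-freeness. I would prove, by induction on $i$, that at every time step $t\in [t^{(s)}_i, t^{(g)}_i]$ agent $a_i$ is the only agent present in $G$. Any agent $a_j$ with $j<i$ has $t^{(g)}_j \le t^{(g)}_{i-1} \le t^{(s)}_i \le t$ and has therefore already been removed; any agent $a_j$ with $j>i$ has $t^{(s)}_j \ge t^{(g)}_{j-1} \ge t^{(g)}_i \ge t$ and, if $t=t^{(g)}_i$, agent $a_i$ is itself removed upon arrival at that step, so no two agents occupy $G$ simultaneously. Consequently no vertex or edge collision is possible at any time step, and the assembled plan $\pi_1,\ldots,\pi_m$ is a valid online MAPF solution.

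I do not expect a real obstacle here; the only subtlety is the boundary convention that an agent is removed at its arrival time step and thus does not collide with any agent that begins at that same step, which is already baked into condition (2) of the path definition. Given that, the induction goes through cleanly and the observation follows.
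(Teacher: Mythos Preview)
Your proposal is correct and follows essentially the same approach as the paper: the paper's justification simply notes that shortest paths exist because $G$ is connected and that the resulting paths are collision-free because no two agents are in $G$ at the same time step. You spell out the same two ingredients in more detail (verifying the monotonicity of $t^{(g)}_i$ and handling the removal-at-arrival boundary), but the underlying argument is identical.
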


\begin{proof}[Reason]
	The shortest path computation of paths $\pi_i$ for all $i\in[m]$ succeeds because graph $G$ is connected. Each agent thus arrives at its goal vertex at a finite time step. Also, the resulting paths are collision-free since no two agents are in graph $G$ at the same time step.
\end{proof}

\subsection{Competitive Ratio Upper Bounds}

We follow the standard definition of competitive ratio \cite{borodin2005online}.
Let the cost of online algorithm \textit{ALG} for an input sequence $\sigma$ (in our case, the sequence of agents%
%of a given online MAPF instance
) be $C_\textit{ALG}(\sigma)$ with respective to a given objective function and the cost of an optimal offline algorithm \textit{OPT} that knows the entire input sequence $\sigma$ a priori (in our case, has full controllability at time step 0 of all agents $a_1, \ldots, a_m$ that might be revealed in the future) be $C_\textit{OPT}(\sigma)$.

\begin{dfn}[Competitive Ratio]
	An online algorithm \textit{ALG} is \textit{$\alpha$-competitive} or has a \textit{competitive ratio} of $\alpha$ if, for all input sequence $\sigma$ and some constant $\delta$,  $C_\textit{ALG}(\sigma) \leq \alpha C_\textit{OPT}(\sigma) + \delta$.
\end{dfn}

We now derive upper bounds on the competitive ratio for SEQUENCE with respect to flowtime and makespan in the following theorems, respectively.

\begin{thm}\label{thm:sequence_ub_flowtime}
	SEQUENCE achieves a competitive ratio of $\mathcal O(m)$ with respect to flowtime.
\end{thm}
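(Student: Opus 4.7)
The plan is to bound SEQUENCE's flowtime directly in terms of the sum of shortest path distances $D := \sum_{i=1}^m \mathit{dist}_i$, and then use that the optimal offline flowtime is at least $D$. To set this up, I will unroll SEQUENCE's recursion $t^{(s)}_i = \max(r_i, t^{(g)}_{i-1})$ and $t^{(g)}_i = t^{(s)}_i + \mathit{dist}_i$ to control the service time $t^{(g)}_i - r_i$ of each agent.

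First I would establish, by induction on $i$, the inequality $t^{(g)}_i - r_i \leq \sum_{j=1}^i \mathit{dist}_j$. The base case $i=1$ is immediate since $t^{(g)}_1 - r_1 = \mathit{dist}_1$. For the inductive step, one splits on whether $r_i \geq t^{(g)}_{i-1}$, in which case $t^{(g)}_i - r_i = \mathit{dist}_i$, or $r_i < t^{(g)}_{i-1}$, in which case
\[
t^{(g)}_i - r_i \;=\; t^{(g)}_{i-1} - r_i + \mathit{dist}_i \;\leq\; t^{(g)}_{i-1} - r_{i-1} + \mathit{dist}_i,
\]
using $r_i \geq r_{i-1}$; the inductive hypothesis then closes the step.

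Summing the per-agent bound yields
\[
C_{\text{SEQUENCE}}(\sigma) \;=\; \sum_{i=1}^m (t^{(g)}_i - r_i) \;\leq\; \sum_{i=1}^m \sum_{j=1}^i \mathit{dist}_j \;\leq\; m \sum_{j=1}^m \mathit{dist}_j \;=\; m D.
\]
For the lower bound on OPT, I will observe that any feasible plan, online or offline, must have $t^{(g)}_i - r_i \geq \mathit{dist}_i$ for every agent, because agent $a_i$ cannot start before its release time $r_i$ and needs at least $\mathit{dist}_i$ moves to reach its goal. Hence $C_{\text{OPT}}(\sigma) \geq D$. Combining the two bounds gives $C_{\text{SEQUENCE}}(\sigma) \leq m \cdot C_{\text{OPT}}(\sigma)$, which fits the competitive ratio definition with $\alpha = m$ and $\delta = 0$.

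I do not expect a genuine obstacle here; the only care needed is in the inductive step, where one must use non-decreasingness of the release times ($r_i \geq r_{i-1}$) to absorb the waiting that one newly-revealed agent inherits from the previous agent's tail. The lower bound $C_{\text{OPT}} \geq D$ is standard but worth stating explicitly, since it is what converts the structural bound $m D$ into a multiplicative ratio of $\mathcal O(m)$.
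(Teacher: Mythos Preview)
Your proof is correct and follows essentially the same approach as the paper: the same induction on $i$ to bound each service time by $\sum_{j\le i}\mathit{dist}_j$ (using $r_i\ge r_{i-1}$ in the key step), summed to $mD$, compared against the lower bound $C_{\text{OPT}}\ge D$. The only cosmetic difference is that the paper writes the inductive step via $\max(0,t^{(g)}_{i-1}-r_i)$ rather than your explicit case split.
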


\begin{proof}
	We first show by induction on $i$ that the service time $t^{(g)}_i - r_i$ of each agent $a_i$ is no larger than $\sum_{j\in[i]}\mathit{dist}_j$. The statement holds trivially for agent $a_1$. Assume that its holds for agent $a_{i-1}$. 
	The service time of agent $a_i$ is thus
	{\small\begin{align}
	t^{(g)}_i - r_i  &=&& t^{(s)}_i + \mathit{dist}_i - r_i\nonumber\\[-.25ex]
	&=&& \max(r_i, t^{(g)}_{i-1}) + \mathit{dist}_i - r_i\nonumber\\[-.25ex]
	&=&& \max(0, t^{(g)}_{i-1} - r_i) + \mathit{dist}_i\nonumber\\[-.25ex]
	&\stackrel{\text{definition}}{\leq}&& \max(0, t^{(g)}_{i-1} - r_{i-1}) + \mathit{dist}_i\nonumber\\[-.25ex]
	&=&& t^{(g)}_{i-1} - r_{i-1} + \mathit{dist}_i\nonumber\\[-.25ex]
	&\stackrel{\text{induction}}{\leq}&& \sum_{j\in[i-1]}\mathit{dist}_j + \mathit{dist}_i = \sum_{j\in[i]}\mathit{dist}_j.\label{eqn:flow}
	\end{align}}%
	The statement thus holds also for agent $a_i$. The flowtime of the plan for all agents is thus no larger than $\sum_{i\in[m]}\sum_{j\in[i]}\mathit{dist}_j \leq m\sum_{j\in[m]}\mathit{dist}_j$. Since the optimal flowtime is no smaller than $\sum_{j\in[m]}\mathit{dist}_j$, the theorem follows.
\end{proof}

%We then derive an upper bound with respect to makespan using an argument similar to the one used in the above proof.
%We let $\rho_{n}$ be the latest release time $r_i$ with $i\leq n$ such that $r_i > r_1 + \sum_{j\in[i-1]}\mathit{dist}_j$ or $\rho_{n}=r_1$ if there is no such release time.

\begin{thm}\label{thm:sequence_ub_makespan}
	SEQUENCE achieves a competitive ratio of $\mathcal O(m)$ with respect to makespan.
\end{thm}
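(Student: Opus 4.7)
The approach mirrors the proof of Theorem \ref{thm:sequence_ub_flowtime}. The plan is first to prove by induction on $i$ that $t^{(g)}_i \leq r_i + \sum_{j\in[i]}\mathit{dist}_j$ for every agent $a_i$ routed by SEQUENCE. The base case $t^{(g)}_1 = r_1 + \mathit{dist}_1$ is immediate. For the inductive step I would unroll $t^{(g)}_i = \max(r_i, t^{(g)}_{i-1}) + \mathit{dist}_i$ and use $r_{i-1} \leq r_i$ together with the induction hypothesis: in the case $t^{(g)}_{i-1} \leq r_i$ the bound is trivial, while in the case $t^{(g)}_{i-1} > r_i$ we get $t^{(g)}_i = t^{(g)}_{i-1} + \mathit{dist}_i \leq r_{i-1} + \sum_{j\in[i-1]}\mathit{dist}_j + \mathit{dist}_i \leq r_i + \sum_{j\in[i]}\mathit{dist}_j$.

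Specializing to $i = m$, the makespan of SEQUENCE satisfies $t^{(g)}_m \leq r_m + \sum_{j\in[m]}\mathit{dist}_j \leq r_m + m\,D$, where $D := \max_{j\in[m]}\mathit{dist}_j$.

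Next I would establish a matching lower bound on $C_\textit{OPT}(\sigma)$. Two trivial observations suffice: (i) agent $a_m$ cannot be removed from $G$ before time $r_m + \mathit{dist}_m \geq r_m$, since it is not available for routing until its release time and must then traverse at least $\mathit{dist}_m$ edges; and (ii) some agent $a_{j^*}$ requires at least $\mathit{dist}_{j^*} = D$ time steps in $G$ regardless of when it starts. Hence $C_\textit{OPT}(\sigma) \geq \max(r_m, D)$.

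Combining the two inequalities, the ratio is at most
\[
\frac{r_m + m\,D}{\max(r_m, D)} \leq \frac{r_m}{\max(r_m,D)} + m \cdot \frac{D}{\max(r_m,D)} \leq 1 + m,
\]
so SEQUENCE is $\mathcal O(m)$-competitive with additive constant $0$. The only step requiring any care is the lower bound on $C_\textit{OPT}$: it must simultaneously capture the temporal constraint from $r_m$ and the spatial constraint from $D$, which is what lets the ratio bound close without an extra additive term. Everything else is a routine adaptation of the flowtime argument.
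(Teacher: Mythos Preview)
Your proof is correct. Both your argument and the paper's reuse the inductive service-time bound $t^{(g)}_i - r_i \le \sum_{j\in[i]}\mathit{dist}_j$ already established for flowtime, and then compare the resulting upper bound on the SEQUENCE makespan against a trivial lower bound on $C_\textit{OPT}$. The difference is only in the anchor: you specialize to $i=m$ and lower-bound $C_\textit{OPT}$ by $\max(r_m,D)$, giving the clean ratio bound $1+m$; the paper instead introduces an auxiliary index $n_K$ (the last release time exceeding $r_1+\sum_{j<n_K}\mathit{dist}_j$), bounds the SEQUENCE makespan by $r_{n_K}+\sum_{j\in[n_K,m]}\mathit{dist}_j$, and lower-bounds $C_\textit{OPT}$ by $\max_{j\in[n_K,m]}(r_{n_K}+\mathit{dist}_j)$. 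Your route is more elementary and self-contained for this theorem; the paper's $n_K$ formulation is slightly less direct here but is reused verbatim in the subsequent definition of rationality (Definition~\ref{dfn:rationality}) and in Theorem~\ref{thm:makespan_ub}, which is presumably why it was set up this way.
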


\begin{proof}
	Let $r_{n_K}$ be the latest release time with $r_{n_K} > r_1 + \sum_{j\in[n_K-1]}\mathit{dist}_j$ or $r_{n_K}=r_1$ if there is no such release time. According to Equation (\ref{eqn:flow}), SEQUENCE computes a solution with makespan no larger than $r_{n_K} + \sum_{j\in[n_K,m]}\mathit{dist}_j\leq r_{n_K} + m\max_{j\in[n_K,m]}\mathit{dist}_j$.
%	{\small\begin{align*}
%	&\leq&&r_n + \sum_{i\in[n,m]}\mathit{dist}_i\\[-.25ex]
%	&\leq&& r_n + m\max_{i\in[n,m]}\mathit{dist}_i.
%	\end{align*}}%
	Since the optimal makespan is no smaller than $\max_{j\in[n_K,m]}(r_{n_K} + \mathit{dist}_j)$, the theorem follows.
\end{proof}

We show in Section \ref{sec:infinite_latency} that the competitive ratio for SEQUENCE is infinite with respect to latency.

\section{Rationality and Competitive Ratio Upper Bounds}\label{sec:upperbound}

The $\mathcal O(m)$ upper bounds on the competitive ratio for the naive algorithm SEQUENCE shown in Theorems \ref{thm:sequence_ub_flowtime} and \ref{thm:sequence_ub_makespan} set a baseline for all online MAPF algorithms about what behavior (quality of the plans returned by the algorithms) is rational. This analysis also inspires the characterization of \textit{rational} algorithms that are guaranteed to result in a solution quality (asymptotically) no worse than SEQUENCE, which extends the notion of optimally-rational algorithms. Note that our definition of rationality is significantly different from that in the optimization and economics literature.
%\footnote{Our definition of rationality is significantly different from that in the optimization and economics literature.}
%It is thus only meaningful to consider only
\begin{dfn}[Rationality]\label{dfn:rationality}
	An online MAPF algorithm is \textit{rational} if and only if, at each release time $r_{\mathcal A_k}$, $\forall k\in [K]$, the plan for
	all revealed agents	(agents in $\mathcal A_{\leq k}$)
	has flowtime no larger than $|\mathcal A_{\leq k}|\sum_{a_i\in \mathcal A_{\leq k}}\mathit{dist}_i$ and makespan no larger than $r_{n_k} + \sum_{i\in[n_k,m_k]}\mathit{dist}_i$ where $m_k=\argmax_i a_i \in \mathcal A_{\leq k}$ and $r_{n_k}\leq r_{\mathcal A_k}$ is the latest release time of agent $a_{n_k}$ with $r_{n_k} > r_1 + \sum_{i\in[n_k-1]}\mathit{dist}_i$ or $r_{n_k}=r_1$ if there is no such release time.
\end{dfn}
The main idea behind Definition \ref{dfn:rationality} is to set upper bounds that quantify how badly an online MAPF algorithm could perform at each release time $r_{\mathcal A_k}$ with respect to both flowtime and makespan so that it still results in a solution quality that is asymptotically no worse than SEQUENCE. Trivially, SEQUENCE is rational by using a similar argument as in the proofs of Theorems \ref{thm:sequence_ub_flowtime} and \ref{thm:sequence_ub_makespan} for each release time $r_{\mathcal A_k}$.
%according to Definition \ref{dfn:rationality}. 
The flowtime term in Definition \ref{dfn:rationality} can be rewritten with respect to latency. However, we show in Section \ref{sec:infinite_latency} that the competitive ratio for all rational algorithms is (at least) infinite with respect to latency and thus do not study its upper bound in this section.

\subsection{Rationalization}
Not all online MAPF algorithms are rational. However, one can \textit{rationalize} any online MAPF algorithm in PLAN-NEW and PLAN-ALL by adding a simple subroutine to it as follows: 
At each release time $r_{\mathcal A_k}$, $\forall k\in [K]$, the online MAPF algorithm calls the subroutine at the end of its computation that checks whether the resulting plan for all agents in $\mathcal A_k$ respects the upper bounds set by Definition \ref{dfn:rationality}. If so, it changes the plan so that the agents move to their goal vertices one after another in increasing order of their indices, as in SEQUENCE, starting at time step $t_{\mathcal A_k}^{(s)}$ that is the maximum of $r_{\mathcal A_k}$ and the makespan of the (old) computed plan at the previous release time $r_{\mathcal A_{k-1}}$ for $k > 1$ and time step $t_{\mathcal A_k}^{(s)}=r_1$ for $k = 1$.
The key idea
%of rationalization for an online MAPF algorithm
is that, after each computation, it checks whether the resulting plan (for the set of agents that have paths) respects the upper bounds set by Definition \ref{dfn:rationality} and, if not, switches to the same behavior as that of SEQUENCE to guarantee that the plan is asymptotically no worse than the plan (for the same set of agents) in SEQUENCE.
We have omitted the discussion of rationalization for PLAN-NEW-SINGLE above, which can be achieved by adding a similar subroutine after each single-agent path-finding computation that compares the flowtime and makespan of the resulting plan to those in SEQUENCE.

\subsection{Rationality of Optimally-Rational Algorithms}

Intuitively, online MAPF algorithms that look reasonable, for example, ones that are optimally-rational, are rational (without rationalization). We prove the following theorem, even though the name ``optimal rationality'' has already suggested it.

\begin{thm}\label{thm:snapshot_rationality}
	Optimally-rational online MAPF algorithms under any given controllability assumption with respect to either flowtime or makespan are rational.
\end{thm}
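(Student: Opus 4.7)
The plan is to prove the theorem by induction on $k$, establishing the flowtime and makespan bounds of Definition \ref{dfn:rationality} simultaneously. At each release time $r_{\mathcal A_k}$, I will exhibit a witness plan for the controllable set whose cost (together with any frozen previously-revealed paths) respects both bounds. Since an optimally-rational algorithm minimizes the cost over all plans feasible for its controllable set, its produced plan has cost no larger than the witness and therefore also satisfies the bounds.

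For PLAN-ALL the witness is simply the SEQUENCE plan rerun from scratch on $\mathcal A_{\leq k}$: it is feasible since the controllable set equals $\mathcal A_{\leq k}$, and the proofs of Theorems \ref{thm:sequence_ub_flowtime} and \ref{thm:sequence_ub_makespan} already verify that it meets both rationality bounds. For PLAN-NEW the witness extends the frozen plan on $\mathcal A_{<k}$ by routing each newly-revealed agent in $\mathcal A_k$, in index order, along its shortest path after the previously-routed agent (the last-arriving agent in $\mathcal A_{<k}$ for the first new agent, or the prior agent in $\mathcal A_k$ for later ones) has vacated the graph, mirroring SEQUENCE restricted to $\mathcal A_k$. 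For PLAN-NEW-SINGLE the same wait-and-go single-agent path serves as the witness at each single-agent call; because it is always a feasible option for the controllable agent, the optimally-rational choice is no worse.

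The flowtime bookkeeping is then direct: the extension adds at most $|\mathcal A_k|(M_{k-1} - r_{\mathcal A_k}) + |\mathcal A_k|\sum_{a_i\in\mathcal A_k}\mathit{dist}_i$ to the flowtime, where $M_{k-1}$ denotes the frozen plan's makespan. Combining this with the inductive flowtime bound $|\mathcal A_{<k}|\sum_{a_i\in\mathcal A_{<k}}\mathit{dist}_i$ and with $M_{k-1} - r_{\mathcal A_k} \leq \sum_{a_i\in\mathcal A_{<k}}\mathit{dist}_i$ (which follows from the inductive makespan bound together with $r_{n_{k-1}} \leq r_{\mathcal A_{k-1}} \leq r_{\mathcal A_k}$) yields the target $|\mathcal A_{\leq k}|\sum_{a_i\in\mathcal A_{\leq k}}\mathit{dist}_i$ after elementary algebra. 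The main obstacle will be the makespan argument for PLAN-NEW and PLAN-NEW-SINGLE: the witness's makespan inherits the bound $r_{n_{k-1}} + \sum_{i\in[n_{k-1}, m_k]}\mathit{dist}_i$ from the previous step, whereas the target at step $k$ is stated in terms of $r_{n_k}$ rather than $r_{n_{k-1}}$. Closing this gap calls for a monotonicity argument from the definition of $n_k$ as the latest qualifying release time in $\mathcal A_{\leq k}$: when $n_k > n_{k-1}$, the defining inequality of $n_k$ together with a SEQUENCE-style catch-up calculation from $a_{n_{k-1}}$ to $a_{n_k-1}$ should yield $r_{n_k} \geq r_{n_{k-1}} + \sum_{i\in[n_{k-1}, n_k-1]}\mathit{dist}_i$, which re-expresses the inherited bound in the required form.
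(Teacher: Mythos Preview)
Your approach is structurally different from the paper's. The paper inducts on the individual agent index $i$ and bounds each service time $t^{(g)}_i - r_i \leq \sum_{j\in[i]}\mathit{dist}_j$ directly for PLAN-NEW-SINGLE (using the single ``wait until everyone is gone, then shortest-path'' option as the witness at each single-agent call); both the flowtime and makespan rationality bounds then drop out of this single per-agent inequality with no further bookkeeping. PLAN-NEW and PLAN-ALL are handled by a one-line dominance remark. You instead induct on the batch index $k$ and carry separate flowtime and makespan invariants, which forces the $n_k$-versus-$n_{k-1}$ reconciliation you flag as the main obstacle.

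That reconciliation does not go through as you describe. Your claimed inequality $r_{n_k} \geq r_{n_{k-1}} + \sum_{i\in[n_{k-1}, n_k-1]}\mathit{dist}_i$ is not a consequence of the defining condition $r_{n} > r_1 + \sum_{i\in[n-1]}\mathit{dist}_i$. Concretely, take $r_1=0,\ \mathit{dist}_1=1$; $r_2=100,\ \mathit{dist}_2=50$; $r_3=120,\ \mathit{dist}_3=1$, with $\mathcal A_{\leq k-1}=\{a_1,a_2\}$ and $\mathcal A_k=\{a_3\}$. Then $n_{k-1}=2$ (since $100>1$) and $n_k=3$ (since $120>51$), but your inequality would require $120 \geq 100+50$, which is false. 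The ``SEQUENCE-style catch-up calculation'' you invoke gives $r_{n_k} > r_1 + \sum_{i\in[n_k-1]}\mathit{dist}_i$, whereas you would need a comparison against $r_{n_{k-1}}$, and the defining inequality for $n_{k-1}$ points the wrong way. The paper's per-agent route never needs to compare $n_k$ with $n_{k-1}$ at all, which is precisely what makes it cleaner.

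There is a second, more structural issue. Your flowtime step uses the inductive makespan bound on $M_{k-1}$, and your makespan step implicitly relies on the algorithm's previous plan meeting the makespan target. But an algorithm that is optimally rational \emph{with respect to flowtime} is only guaranteed to undercut the witness's flowtime, not its makespan; so after one batch you may lose the makespan invariant and with it the ingredient your flowtime argument needs at the next batch. The paper's per-agent bound sidesteps this coupling because a single inequality on $t^{(g)}_i$ simultaneously controls both objectives, and for PLAN-NEW-SINGLE both flowtime- and makespan-optimal choices are dominated by the same single-agent witness arrival time. If you want to keep the batch-level induction, you would need a stronger invariant (e.g., a per-agent service-time bound carried through the batches) rather than the two aggregate bounds separately.
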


\begin{proof}
	We consider an arbitrary optimally-rational online MAPF algorithm in PLAN-NEW-SINGLE with respect to either flowtime or makespan. We show that, at each release time $r_{\mathcal A_k}$, $\forall k\in [K]$, the plan for all revealed agents has flowtime no larger than $|\mathcal A_{\leq k}|\sum_{a_i\in \mathcal A_{\leq k}}\mathit{dist}_i$ and makespan no larger than $r_{n_k} + \sum_{i\in[n_k,m_k]}\mathit{dist}_i$ and the algorithm is thus rational (Definition \ref{dfn:rationality}). We first show by induction on $i$ that the service time $t^{(g)}_i - r_i$ of each agent $a_i$ is no larger than $\sum_{j\in[i]}\mathit{dist}_j$. The statement holds trivially for agent $a_1$ for such an algorithm. Assume that it holds for agents $a_2, \ldots, a_{i-1}$. 
	The service time of agent $a_i$ is thus
	{\small\begin{align*}
			t^{(g)}_i - r_i &=&& t^{(s)}_i + \mathit{dist}_i - r_i\\[-.25ex]
			&\stackrel{\text{opt. rat.}}{\leq}&& \max(r_i, \max_{j\in[i-1]} t^{(g)}_j) + \mathit{dist}_i - r_i\\[-.25ex]
			&\leq&& \max(0, \max_{j\in[i-1]} (t^{(g)}_j - r_i)) + \mathit{dist}_i\\[-.25ex]
			%&=&& \max_{j\in[i-1]} (t^{(g)}_j - r_i) + \mathit{dist}_i\\[-.25ex]
			&\stackrel{\text{definition}}{\leq}&& \max(0, \max_{j\in[i-1]} (t^{(g)}_j - r_j)) + \mathit{dist}_i\\[-.25ex]
			&\stackrel{\text{induction}}{\leq}&&  \max(0, \max_{j\in[i-1]}\sum_{j'\in[j]}\mathit{dist}_{j'}) + \mathit{dist}_i\\[-.25ex]
			&=&&  \sum_{j\in[i-1]}\mathit{dist}_j + \mathit{dist}_i = \sum_{j\in[i]}\mathit{dist}_j
	\end{align*}}%
	for any optimally-rational online MAPF algorithm in PLAN-NEW-SINGLE with respect to either flowtime or makespan.
	%Therefore, using an argument similar to the one for deriving Equation (\ref{eqn:flow}) in the proof of Theorem \ref{thm:sequence_ub_flowtime},
	The statement thus holds also for agent $a_i$. The plan for all revealed agents at any release time $r_{\mathcal A_k}$ thus has flowtime $F_k$ no larger than $\sum_{i\in \mathcal A_{\leq k}}\sum_{j\in[i]}\mathit{dist}_j \leq |\mathcal A_{\leq k}|\sum_{j\in\mathcal A_{\leq k}}\mathit{dist}_j$ and makespan $M_k$ no larger than $r_{n_k} + \sum_{j\in[n_k,m_k]}\mathit{dist}_j$.
	For any optimally-rational online MAPF algorithm in PLAN-NEW or PLAN-ALL with respect to either flowtime or makespan, the plan for all revealed agents at each release time $r_{\mathcal A_k}$ has flowtime no larger than $F_k$ and makespan no larger than $M_k$ (or it is not optimally-rational otherwise). Rationality is thus maintained at each release time $r_{\mathcal A_k}$.
\end{proof}

\subsection{Competitive Ratio Upper Bounds}

We show that all rational online MAPF algorithms perform asymptotically no worse than SEQUENCE with respect to both flowtime and makespan.

\begin{thm}\label{thm:flowtime_ub}
	All rational online MAPF algorithms achieve a competitive ratio of $\mathcal O(m)$ with respect to flowtime.
\end{thm}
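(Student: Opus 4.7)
The plan is to apply Definition \ref{dfn:rationality} directly at the final release time, at which point all $m$ agents have been revealed, and then compare the resulting upper bound on the algorithm's flowtime with a trivial lower bound on the optimal offline flowtime.

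First I would specialize Definition \ref{dfn:rationality} to $k=K$, so that $\mathcal A_{\leq K}=\{a_1,\ldots,a_m\}$. By rationality, the plan produced by the algorithm for all $m$ agents has flowtime at most
\[
|\mathcal A_{\leq K}|\sum_{a_i\in\mathcal A_{\leq K}}\mathit{dist}_i \;=\; m\sum_{i\in[m]}\mathit{dist}_i.
\]
This bound holds for the algorithm's final solution, not just an intermediate snapshot, so it directly upper-bounds $C_{\textit{ALG}}(\sigma)$ with respect to flowtime.

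Next I would argue the matching offline lower bound. Since each agent $a_i$ cannot enter the graph before its release time $r_i$, and then requires at least $\mathit{dist}_i$ time steps to traverse from $s_i$ to $g_i$ (by definition of $\mathit{dist}_i$ as the shortest-path length in $G$), the service time of agent $a_i$ in any feasible solution satisfies $t^{(g)}_i-r_i\geq \mathit{dist}_i$. Summing over all agents yields
\[
C_{\textit{OPT}}(\sigma) \;=\; \sum_{i\in[m]}\bigl(t^{(g)}_i - r_i\bigr) \;\geq\; \sum_{i\in[m]}\mathit{dist}_i.
\]
Dividing the two bounds gives $C_{\textit{ALG}}(\sigma)/C_{\textit{OPT}}(\sigma) \leq m$, which establishes the $\mathcal O(m)$ competitive ratio (with additive constant $\delta=0$).

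I do not anticipate any real obstacle: once Definition \ref{dfn:rationality} is in hand, the flowtime upper bound is immediate and the offline lower bound is a one-line observation from graph connectivity and shortest-path optimality. The only subtle point worth stating explicitly is that Definition \ref{dfn:rationality} applies at \emph{every} release time, so in particular at $r_{\mathcal A_K}$ we obtain a bound on the flowtime of the full solution returned by the algorithm, rather than merely on some intermediate plan. This mirrors the structure of the proof of Theorem \ref{thm:sequence_ub_flowtime} but replaces the inductive argument on service times with a direct appeal to the rationality definition.
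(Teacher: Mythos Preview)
Your proposal is correct and essentially identical to the paper's own proof: specialize Definition~\ref{dfn:rationality} at $k=K$ to bound the algorithm's flowtime by $m\sum_{i\in[m]}\mathit{dist}_i$, observe that the optimal flowtime is at least $\sum_{i\in[m]}\mathit{dist}_i$, and conclude. The paper states this in two sentences without spelling out the lower-bound justification, but your added explanation is accurate and the argument is the same.
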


\begin{proof}
	According to Definition \ref{dfn:rationality}, the plan computed by any rational online MAPF algorithm at the latest release time $r_{\mathcal A_K} = r_m$ has flowtime no larger than 
	$|\mathcal A_{\leq K}|\sum_{a_i\in \mathcal A_{\leq K}}\mathit{dist}_i = m\sum_{i\in[m]}\mathit{dist}_i.$
	Since the optimal flowtime is no smaller than $\sum_{i\in[m]}\mathit{dist}_i$, the theorem follows.
\end{proof}

\begin{thm}\label{thm:makespan_ub}
	All rational online MAPF algorithms achieve a competitive ratio of $\mathcal O(m)$ with respect to makespan.
\end{thm}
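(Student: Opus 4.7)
The plan is to piggyback on the argument already used for SEQUENCE in Theorem \ref{thm:sequence_ub_makespan}, since Definition \ref{dfn:rationality} was designed precisely so that any rational algorithm obeys the same makespan bound at every release time as SEQUENCE does.

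First, I would specialize Definition \ref{dfn:rationality} to the \emph{final} release time $r_{\mathcal A_K}$. At that moment $\mathcal A_{\leq K}$ contains all $m$ agents, so $m_K = m$, and rationality guarantees that the plan output by any rational algorithm has makespan at most
\[
  r_{n_K} + \sum_{i\in[n_K,m]}\mathit{dist}_i \;\leq\; r_{n_K} + m\,\max_{i\in[n_K,m]}\mathit{dist}_i,
\]
where $r_{n_K}$ is the distinguished release time from the definition.

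Next I would lower-bound the optimal offline makespan. Since the agents are indexed in non-decreasing order of release time, every agent $a_i$ with $i\in[n_K,m]$ satisfies $r_i\geq r_{n_K}$, and $a_i$ cannot arrive at its goal before time $r_i+\mathit{dist}_i$ under any feasible plan. Hence
\[
  C_{\textit{OPT}}(\sigma) \;\geq\; \max_{i\in[n_K,m]}(r_i+\mathit{dist}_i) \;\geq\; r_{n_K} + \max_{i\in[n_K,m]}\mathit{dist}_i.
\]
Combining the two bounds gives a ratio of the form $(A+mB)/(A+B)$ with $A=r_{n_K}\geq 0$ and $B=\max_{i\in[n_K,m]}\mathit{dist}_i\geq 0$, which is at most $m$. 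Thus $C_{\textit{ALG}}(\sigma)\leq m\,C_{\textit{OPT}}(\sigma)$, establishing the $\mathcal O(m)$ competitive ratio.

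There is no real obstacle here; the theorem is essentially a corollary of the way rationality was defined, and the only substantive step is recognizing that the lower bound on $C_{\textit{OPT}}$ used in the SEQUENCE argument is in fact a universal lower bound that applies to any feasible offline solution, not merely to SEQUENCE's specific construction. The remainder is a direct comparison of the two inequalities.
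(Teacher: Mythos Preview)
Your proposal is correct and follows essentially the same approach as the paper: specialize Definition~\ref{dfn:rationality} at the final release time to get the upper bound $r_{n_K}+\sum_{i\in[n_K,m]}\mathit{dist}_i\le r_{n_K}+m\max_{i\in[n_K,m]}\mathit{dist}_i$, lower-bound the optimal makespan by $r_{n_K}+\max_{i\in[n_K,m]}\mathit{dist}_i$, and compare. Your write-up is in fact slightly more explicit than the paper's, which simply states the two bounds and says ``the theorem follows'' without spelling out the $(A+mB)/(A+B)\le m$ step.
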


\begin{proof}
	According to Definition \ref{dfn:rationality}, the plan computed by any rational online MAPF algorithm at the latest release time $r_{\mathcal A_K} = r_m$ has makespan no larger than $r_{n_K} + \sum_{i\in[n_K,m]}\mathit{dist}_i\leq r_{n_K} + m\max_{i\in[n_K,m]}\mathit{dist}_i$. Since the optimal makespan is no smaller than $\max_{i\in[n_K,m]}(r_{n_K} + \mathit{dist}_i)$, the theorem follows.
\end{proof}

\section{Competitive Ratio Lower Bounds}\label{sec:lowerbound}

We show that all rational online MAPF algorithms in PLAN-NEW achieve a competitive ratio of at least $\varOmega(m)$ with respect to both flowtime and makespan by constructing an online MAPF instance on a 4-neighbor 2D grid. Therefore, the $\mathcal O(m)$ upper bounds on the competitive ratio of rational online MAPF algorithms in PLAN-NEW that we derived in Theorems \ref{thm:flowtime_ub} and \ref{thm:makespan_ub} are (asymptotically) tight even for online MAPF instances on 4-neighbor 2D grids. Consequently, we show that rational algorithms do not necessarily outperform irrational ones. We also derive lower bounds on the competitive ratio of rational online MAPF algorithms in PLAN-ALL with respect to flowtime and makespan. Finally, we show that all rational online MAPF algorithms have infinite competitive ratio with respect to latency.

\subsection{Rational Algorithms in PLAN-NEW}

The following theorems show that the competitive ratio of all rational online MAPF algorithms in PLAN-NEW, including optimally-rational ones, is at least $\varOmega(m)$ even on 4-neighbor 2D grids (and even lines) with respect to both flowtime and makespan.

\begin{thm}\label{thm:new_flowtime_lb}
	There exists an online MAPF instance on a 4-neighbor 2D grid for which any rational online MAPF algorithm in PLAN-NEW achieves a competitive ratio of $\varOmega(m)$ with respect to flowtime.
\end{thm}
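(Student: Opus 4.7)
The plan is to exhibit an explicit online MAPF instance $\sigma$ on a 4-neighbor 2D grid, parameterized by the number of agents $m$, for which the optimal offline flowtime $C_{\mathrm{OPT}}(\sigma)$ is $\mathcal{O}(m)$ while every rational algorithm $\mathrm{ALG}$ in PLAN-NEW is forced to incur $C_{\mathrm{ALG}}(\sigma) = \varOmega(m^2)$, yielding a competitive ratio of $\varOmega(m)$ with respect to flowtime.

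First, I would design the graph and release schedule together. The grid contains a thin ``main corridor'' of length $\Theta(m)$ together with a slightly longer ``bypass.'' Agent $a_1$ is released at time $0$ with start and goal placed at the two ends of the corridor so that its shortest path is unique and traverses each cell of the main corridor at a predictable time step. Crucially, Definition \ref{dfn:rationality} applied at $r_{\mathcal A_1}$ gives the flowtime budget $|\mathcal A_{\le 1}|\cdot\mathit{dist}_1 = \mathit{dist}_1$, which coincides with $a_1$'s minimum possible service time; every rational algorithm is therefore forced to commit $a_1$ to a shortest path, and hence, by the construction, to the unique corridor path. The remaining $m-1$ agents $a_2, \ldots, a_m$ are released at later time steps, with start and goal positions chosen so that each $a_i$'s shortest path needs to use some cell of the main corridor at exactly the time step when $a_1$ is committed to occupy it.

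Second, I would prove the offline upper bound. An offline algorithm, using its full knowledge of all release times, routes $a_1$ through the bypass, paying only a constant additive cost for $a_1$, and leaves the main corridor empty. Each of $a_2, \ldots, a_m$ then takes its shortest path through the corridor with service time $\mathcal{O}(1)$, giving total offline flowtime $\mathcal{O}(m)$. Third, I would prove the online lower bound. Because PLAN-NEW forbids rerouting $a_1$, the corridor cells are blocked at the adversarially chosen time steps. For each $i \ge 2$, I would argue that the committed paths of $a_1, \ldots, a_{i-1}$ cover enough of the relevant space--time region that $a_i$ must either queue in the bypass behind the already-committed agents or take a long detour; the construction forces either option to yield service time $\varOmega(i)$. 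Summing gives $\sum_{i=1}^{m}\varOmega(i) = \varOmega(m^2)$, and dividing by the offline cost gives the claimed competitive ratio.

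The main obstacle is establishing the online lower bound uniformly over all rational algorithms, not only the optimally-rational one. Rationality merely bounds the flowtime of the plan at each release time, so in principle a rational algorithm could reshape how the budget is distributed among agents or select non-shortest paths for $a_2, \ldots, a_m$. The remedy is to design the instance so that the graph itself eliminates most alternatives: $a_1$'s shortest path is unique by construction, the bypass is narrow enough that only a constant number of agents can traverse each of its cells per time window, and every detour available to a late-arriving agent $a_i$ has length that scales with $i$. These structural constraints make the lower bound argument largely combinatorial and independent of the rational algorithm's tie-breaking choices.
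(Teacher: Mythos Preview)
Your construction has a genuine gap in the online lower-bound step. Once $a_1$ is committed to the corridor, its path occupies each corridor cell for only a single time step, so any individual later agent $a_i$ can avoid $a_1$ with $O(1)$ extra delay. Your claim that $a_i$ nevertheless pays $\varOmega(i)$ therefore rests on the already-committed paths of $a_2,\dots,a_{i-1}$ filling in the remaining space--time region. But rationality does not pin those paths down: Definition~\ref{dfn:rationality} imposes only aggregate flowtime and makespan budgets, and with $\mathit{dist}_1=\Theta(m)$ and $\mathit{dist}_i=O(1)$ for $i\ge 2$ the flowtime budget after $k$ agents is $k\bigl(\Theta(m)+O(k)\bigr)=\Theta(km)$, leaving enormous slack. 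A rational PLAN-NEW algorithm is free to place $a_2,\dots,a_{i-1}$ so as to leave room for $a_i$. Your proposed remedy---that ``every detour available to $a_i$ has length that scales with $i$''---is not realizable on a fixed grid: detour lengths are determined by the geometry, not by the agent index, and the bypass you introduce precisely gives every agent a short alternative.

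The paper sidesteps this by removing all geometric freedom. It uses a one-dimensional line $v_0,\dots,v_m$ (a $1\times(m{+}1)$ grid) with $m$ agents released at times $0,1,\dots,m{-}1$ that \emph{alternate direction}: odd-indexed agents go $v_0\to v_m$, even-indexed go $v_m\to v_0$. On a line, two agents heading opposite ways cannot pass at all, so $a_i$ cannot enter the graph until $a_{i-1}$'s committed arrival time. The \emph{makespan} clause of Definition~\ref{dfn:rationality} (one checks $r_{n_k}=r_1=0$ for every $k$, giving the bound $\sum_{j\le k}\mathit{dist}_j=km$) is then exactly tight and forces $a_i$ to start immediately after $a_{i-1}$ and move without waiting, uniquely determining every rational algorithm's plan. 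The resulting flowtime is $\tfrac12 m^3+\tfrac12 m$ against an optimal $\tfrac{15}{8}m^2-\tfrac54 m$ (batch the odd agents first, then the even ones), giving the $\varOmega(m)$ ratio. The two ideas you are missing are (i) a graph with \emph{no} bypass, so collision-avoidance alone forces full serialization, and (ii) using the makespan half of rationality, not the flowtime half, to pin down each subsequent agent's path.
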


\begin{proof}
	Consider an online MAPF instance on the 4-neighbor 2D grid shown in Figure \ref{fig:new_flowtime_lb} where an even number $m$ of agents are given and agent $a_i$ has release time $r_i = i - 1$, for all $i\in[m]$. Agent $a_i$ has start vertex $s_i = v_0$ and goal vertex $g_i = v_m$ if $i$ is odd and start vertex $s_i = v_m$ and goal vertex $g_i = v_0$ if $i$ is even. Consider an arbitrary rational online MAPF algorithm in PLAN-NEW. At time step 0, the algorithm computes the only possible path for agent $a_1$ where it starts at time step 0, moves without waiting, and arrives at $g_1 = v_m$ at time step $m$ since other paths violate the makespan upper bound in Definition \ref{dfn:rationality}. At time step 1, the algorithm computes the only possible path for agent $a_2$ where it starts at time step $m$ when agent $a_1$ has arrived at $g_1$ (or it collides with agent $a_1$ otherwise), moves without waiting, and arrives at $g_2 = v_0$ at time step $2m$ since other paths violate the makespan upper bound in Definition \ref{dfn:rationality} ($r_{n_2} = r_1$). We apply the argument to each agent $a_i$, $i \in [3,m]$, that it has to start after agent $a_{i-1}$ has arrived at $g_{i-1}$ (since the paths for all previously-revealed agents do not change for algorithms in PLAN-NEW) and moves to its goal vertex without waiting at any vertex to maintain rationality (Definition  \ref{dfn:rationality} with $r_{n_k} = r_1$ for release time $r_{\mathcal A_k}, \forall k\in[K], K=m$). The resulting flowtime is $m + (2m - 1) + (3m - 2) + \ldots + (m^2 - m + 1) = \frac12m^3 + \frac12m$. The optimal plan lets agents $a_i$ start at their release times $r_i$ if $i$ is odd and at time steps $2m - 3 + \frac i2$ if $i$ is even (from time step $2m-2$ on when the last agent $a_{m-1}$ with an odd index has arrived at its goal vertex). All agents move to their goal vertices without waiting once they start, resulting in service time $m$ for all odd numbers $i$ and $3m - 2 - \frac i2$ for all even numbers $i$. The optimal flowtime is thus $\frac{15}8m^2 - \frac54m$. The theorem thus follows.
\end{proof}

\begin{figure}
\centering
\begin{subfigure}[b]{0.49\columnwidth}
	\centering
	\begin{tikzpicture}[scale=0.5]
	\draw[thick, step=1cm] (0,0) grid (3.2,1);
	\node at (.5,.5) {$v_0$};
	\node at (1.5,.5) {$v_1$};
	\node at (2.5,.5) {$v_2$};
	\node at (4,.5) {\huge ...};
	\draw[thick, step=1cm] (6,1) grid (4.8,0);
	\node at (5.5,.5) {$v_m$};		
	\end{tikzpicture}
	\caption{Theorem \ref{thm:new_flowtime_lb}.}
	\label{fig:new_flowtime_lb}
\end{subfigure}
\hfill
\begin{subfigure}[b]{0.49\columnwidth}
	\centering
	\begin{tikzpicture}[scale=0.5]
	\draw[thick, step=1cm] (-1,-1) grid (1,1);
	%	\node at (-0.25,0.25) {$s_1$};
	%	\node at (0.75,-0.75) {$g_1$};
	%	\node at (0.25,0.75) {$s_2$};
	%	\node at (-0.75,+0.75) {$g_2$};	
	\node at (-0.5,0.5) {$v_1$};
	\node at (0.5,0.5) {$v_2$};
	\node at (-0.5,-0.5) {$v_3$};
	\node at (0.5,-0.5) {$v_4$};	
	\end{tikzpicture}
	\caption{Theorem \ref{thm:all_flowtime_lb}.}\label{fig:all_flowtime_lb}
\end{subfigure}
\caption{Online MAPF instances used for theorems.}
\end{figure}
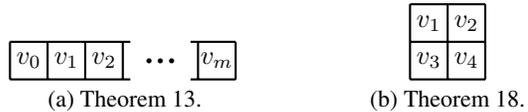

\begin{thm}\label{thm:new_makespan_lb}
	There exists an online MAPF instance on a 4-neighbor 2D grid for which any rational online MAPF algorithm in PLAN-NEW achieves a competitive ratio of $\varOmega(m)$ with respect to makespan.
\end{thm}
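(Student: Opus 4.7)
The plan is to reuse the very same construction as in the proof of Theorem \ref{thm:new_flowtime_lb}: an $(m+1)$-vertex line $v_0, v_1, \ldots, v_m$ on a 4-neighbor grid, with an even number $m$ of agents, where $r_i = i-1$ and each odd-indexed agent goes from $v_0$ to $v_m$ while each even-indexed agent goes from $v_m$ to $v_0$. Because the instance is identical, the structural argument already carried out for flowtime transfers verbatim: for any rational online MAPF algorithm in PLAN-NEW, the rationality upper bound on makespan in Definition \ref{dfn:rationality} (applied at each release time $r_{\mathcal A_k}$ with $r_{n_k}=r_1=0$) forbids any agent from waiting along the line, and the PLAN-NEW restriction forbids rerouting of previously-revealed agents; so each $a_i$ must start only after $a_{i-1}$ has reached its goal and then traverse the line without waiting.

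First I would spell out this forced schedule and read off the algorithm's makespan. Agent $a_1$ arrives at $v_m$ at time $m$, agent $a_2$ starts at time $m$ and arrives at $v_0$ at time $2m$, and in general agent $a_i$ arrives at time $im$. Hence the makespan of the algorithm's solution is $t^{(g)}_m = m^2$.

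Next I would exhibit an offline solution of makespan $O(m)$ to lower-bound the denominator of the competitive ratio. The construction from Theorem \ref{thm:new_flowtime_lb} works directly: odd-indexed agents start at their release times $r_i = i-1$ and walk to $v_m$ without waiting, finishing by time $2m-2$; even-indexed agents are pipelined from time $2m-2$ onward (one per time step, each moving without waiting), so agent $a_m$ arrives at $v_0$ at time $(2m-2)+\tfrac{m}{2}+m = \tfrac{7m}{2}-2$. Thus the optimal makespan is at most $\tfrac{7m}{2}-2 = \Theta(m)$, and dividing gives a competitive ratio of $m^2 / \Theta(m) = \varOmega(m)$.

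The only nontrivial step is the first: justifying that no rational PLAN-NEW algorithm can do anything subtler than the sequential schedule described above. I would not redo this argument from scratch but rather cite the corresponding paragraph of the proof of Theorem \ref{thm:new_flowtime_lb}, noting that the only property invoked there is the makespan clause of Definition \ref{dfn:rationality} together with the PLAN-NEW restriction (that previously-revealed agents' paths are frozen), both of which are available here. The remaining arithmetic — comparing $m^2$ against $\Theta(m)$ — is immediate, so I do not anticipate any genuine obstacle beyond stating the reduction cleanly.
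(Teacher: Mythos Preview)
Your proposal is correct and follows essentially the same approach as the paper: reuse the line instance from Theorem~\ref{thm:new_flowtime_lb}, invoke that proof's argument to conclude the algorithm's makespan is $m^2$, and compare against the optimal offline makespan of $\Theta(m)$. The only discrepancy is a harmless off-by-one in your pipelining arithmetic---the last even agent $a_m$ is the $(m/2)$-th even agent and so starts at $2m-2+(m/2-1)$, giving optimal makespan $\tfrac{7m}{2}-3$ rather than $\tfrac{7m}{2}-2$---which does not affect the $\varOmega(m)$ conclusion.
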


%\begin{proof}
%	We use the same online MAPF instance and argument as in the proof of Theorem \ref{thm:new_flowtime_lb}. Any rational online MAPF algorithm in PLAN-NEW results in makespan $m^2 + m - 1$. The optimal plan has makespan $\frac72m - 2$. The theorem thus follows.
%\end{proof}

\begin{proof}
	Using the same online MAPF instance and argument as in the proof of Theorem \ref{thm:new_flowtime_lb}, any rational online MAPF algorithm in PLAN-NEW results in makespan $m^2$. The optimal plan (with respect to both flowtime and makespan) has makespan $\frac72m - 3$. The theorem thus follows.
\end{proof}

Theorems \ref{thm:new_flowtime_lb} and \ref{thm:new_makespan_lb} thus show that the upper bounds that we derived in Theorems \ref{thm:flowtime_ub} and \ref{thm:makespan_ub} for flowtime and makespan, respectively, are asymptotically tight for all rational online MAPF algorithms in PLAN-NEW, yielding the following corollaries.

\begin{cor}
	All rational online MAPF algorithms in PLAN-NEW are $\varTheta(m)$-competitive with respect to flowtime even on 4-neighbor 2D grids.
\end{cor}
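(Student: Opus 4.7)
The plan is to obtain this corollary simply by combining two already-established results, since $\varTheta(m)$ is by definition the conjunction of an $\mathcal O(m)$ upper bound and an $\varOmega(m)$ lower bound. No new construction is needed.

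First, I would invoke Theorem \ref{thm:flowtime_ub}, which states that \emph{all} rational online MAPF algorithms (regardless of controllability category) achieve a competitive ratio of $\mathcal O(m)$ with respect to flowtime. In particular, since rational algorithms in PLAN-NEW form a subclass of rational algorithms in general, this upper bound applies to them as well. This handles the ``$\mathcal O$'' direction without any further work.

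Next, I would invoke Theorem \ref{thm:new_flowtime_lb}, which exhibits an explicit online MAPF instance on a 4-neighbor 2D grid (the line-shaped grid in Figure \ref{fig:new_flowtime_lb}) on which every rational online MAPF algorithm in PLAN-NEW incurs competitive ratio $\varOmega(m)$ with respect to flowtime. This witness instance is already a 4-neighbor 2D grid, so it simultaneously establishes the ``$\varOmega$'' direction and the ``even on 4-neighbor 2D grids'' qualifier in the corollary.

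Combining the matching upper and lower bounds yields the tight $\varTheta(m)$ competitive ratio claimed in the corollary. I do not anticipate any obstacle here: the corollary is a direct consequence of the two theorems, and the only thing to be careful about is to note explicitly that the 4-neighbor 2D grid caveat is inherited from Theorem \ref{thm:new_flowtime_lb}, while the upper bound from Theorem \ref{thm:flowtime_ub} holds for arbitrary graphs and therefore a fortiori for 4-neighbor 2D grids.
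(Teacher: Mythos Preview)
Your proposal is correct and matches the paper's own justification: the corollary is stated immediately after noting that Theorems~\ref{thm:new_flowtime_lb} and~\ref{thm:new_makespan_lb} show the upper bounds of Theorems~\ref{thm:flowtime_ub} and~\ref{thm:makespan_ub} are asymptotically tight, with no additional argument given. Your explicit remark that the 2D-grid qualifier is inherited from the lower-bound construction while the upper bound holds on arbitrary graphs is a helpful clarification the paper leaves implicit.
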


\begin{cor}
	All rational online MAPF algorithms in PLAN-NEW are $\varTheta(m)$-competitive with respect to makespan even on 4-neighbor 2D grids.
\end{cor}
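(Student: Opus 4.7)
The plan is to observe that the corollary is a direct combination of two results already established in the paper: the general upper bound and the specific lower-bound construction for the makespan objective. Since $\varTheta(m)$-competitiveness is, by definition, simultaneous $\mathcal{O}(m)$ and $\varOmega(m)$ competitiveness, nothing beyond citation is really required.

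First I would invoke Theorem \ref{thm:makespan_ub}, which says that every rational online MAPF algorithm (regardless of controllability category) achieves competitive ratio $\mathcal{O}(m)$ with respect to makespan; since PLAN-NEW is a subset of the set of all rational algorithms once we fix the controllability, this upper bound applies in particular to every rational algorithm in PLAN-NEW. Second, I would invoke Theorem \ref{thm:new_makespan_lb}, which exhibits a specific online MAPF instance on a 4-neighbor 2D grid on which every rational algorithm in PLAN-NEW incurs competitive ratio at least $\varOmega(m)$ (using the construction of Figure \ref{fig:new_flowtime_lb} where the PLAN-NEW algorithm is forced to serve the alternating agents one after another, while the offline optimum can interleave them). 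Combining the two bounds gives $\varTheta(m)$-competitiveness, and since the witness for the lower bound lives on a 4-neighbor 2D grid, the "even on 4-neighbor 2D grids" qualifier is preserved.

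The main (and essentially only) subtlety is to remark that Theorem \ref{thm:makespan_ub} is stated for rational algorithms in general, so no extra work is needed to specialize it to PLAN-NEW; and that the instance built in Theorem \ref{thm:new_makespan_lb} is indeed a 4-neighbor 2D grid, so the grid qualifier transfers directly. There is no genuine obstacle: the corollary is a statement of tightness that the two adjacent theorems have already proved in opposite directions.
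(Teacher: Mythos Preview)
Your proposal is correct and matches the paper's own treatment: the corollary is stated immediately after Theorems \ref{thm:new_flowtime_lb} and \ref{thm:new_makespan_lb} with the remark that these theorems show the upper bounds of Theorems \ref{thm:flowtime_ub} and \ref{thm:makespan_ub} are asymptotically tight, and no separate proof is given. Your observation that the grid qualifier carries over because the lower-bound witness is a 4-neighbor 2D grid is exactly the implicit reasoning the paper relies on.
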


\subsection{Irrational Algorithms}

A counter-intuitive insight we obtain from the above theorems is that rational online MAPF algorithms do not necessarily outperform the irrational ones and rationalization could harm the solution quality.

\begin{obs}
	 There exists an online MAPF instance on a 4-neighbor 2D grid for which an irrational online MAPF algorithm in PLAN-NEW-SINGLE outperforms any rational and thus rationalized online MAPF algorithms in PLAN-NEW with respect to both flowtime and makespan.
\end{obs}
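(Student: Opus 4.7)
The plan is to reuse the 4-neighbor 2D grid instance (the line of Figure \ref{fig:new_flowtime_lb}) from Theorem \ref{thm:new_flowtime_lb} and exhibit a specific, instance-tailored irrational algorithm in PLAN-NEW-SINGLE whose output plan matches the optimal offline plan already described in that proof. Because Theorems \ref{thm:new_flowtime_lb} and \ref{thm:new_makespan_lb} have established that every rational algorithm in PLAN-NEW forces flowtime $\tfrac{1}{2}m^3 + \tfrac{1}{2}m$ and makespan $m^2$ on this instance, and the optimal plan has flowtime $\tfrac{15}{8}m^2 - \tfrac{5}{4}m$ and makespan $\tfrac{7}{2}m - 3$, it will suffice to check that one irrational PLAN-NEW-SINGLE algorithm can realize that optimal plan agent by agent.

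I would specify the algorithm as follows. At each release time $r_i = i - 1$, it plans the path of the unique newly-revealed agent $a_i$ while leaving the paths of previously revealed agents untouched. For odd $i$, it sets $t_i^{(s)} = i - 1$ and sends $a_i$ along $v_0, v_1, \ldots, v_m$ without waiting, arriving at time $i - 1 + m$. For even $i$, it sets $t_i^{(s)} = 2m - 3 + i/2$ (so the agent does not enter the graph until then) and sends $a_i$ along $v_m, v_{m-1}, \ldots, v_0$ without waiting, arriving at time $3m - 3 + i/2$. I would then check three structural properties: that the algorithm plans a path for exactly one new agent at each release time (hence is in PLAN-NEW-SINGLE), that the odd-indexed agents form a collision-free train with consecutive members staying two cells apart throughout, and that every odd agent has arrived at $v_m$ and been removed before any even agent starts moving toward $v_0$. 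A minor bookkeeping point arises because $a_{m-1}$ arrives at $v_m$ at the same step $2m-2$ that $a_2$ is scheduled to enter at $v_m$; this is permitted by the convention that an agent does not collide at its own goal vertex upon arrival, or equivalently one may shift every even starting time up by one without changing the $\Theta$ estimates.

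I would then verify irrationality via Definition \ref{dfn:rationality} at $r_2 = 1$: since $r_{n_2} = r_1 = 0$, the rationality bound on makespan equals $\mathit{dist}_1 + \mathit{dist}_2 = 2m$, yet in our plan $a_2$ arrives at time $3m - 2 > 2m$ for every $m \ge 3$. The algorithm is therefore irrational, and reading off its flowtime $\tfrac{15}{8}m^2 - \tfrac{5}{4}m$ and makespan $\tfrac{7}{2}m - 3$, both a factor of $\Theta(m)$ below the corresponding quantities forced on every rational algorithm in PLAN-NEW by Theorems \ref{thm:new_flowtime_lb} and \ref{thm:new_makespan_lb}, completes the observation. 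The main obstacle I anticipate is precisely the collision-freeness bookkeeping at the handoff around $v_m$ between the odd and even trains; once that is handled the rest reduces to the arithmetic already carried out for the optimal plan in the proof of Theorem \ref{thm:new_flowtime_lb}.
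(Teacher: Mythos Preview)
Your proposal is correct and takes essentially the same approach as the paper's own one-sentence reason, which simply notes that a dummy algorithm imitating the optimal offline plan on the instance from Theorems~\ref{thm:new_flowtime_lb} and~\ref{thm:new_makespan_lb} is irrational and outperforms every rational PLAN-NEW algorithm. Your write-up is considerably more careful about the bookkeeping (collision-freeness at the handoff near $v_m$, the explicit irrationality check via the makespan bound at $r_2$), but the underlying idea is identical.
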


\begin{proof}[Reason]
	For the online MAPF instance used in the proofs of Theorems \ref{thm:new_flowtime_lb} and \ref{thm:new_makespan_lb}, a dummy online MAPF algorithm that imitates the optimal offline algorithm, which can be viewed as in PLAN-NEW-SINGLE, is irrational and outperforms any rational online MAPF algorithm in PLAN-NEW.
\end{proof}

However, it is ``irrational'' and impossible for one to design such a perfect irrational online MAPF algorithm, for example, one that delays agent $a_2$ by $m$ time steps in the above example, that imitates the behavior of the optimal offline algorithm in practice without any knowledge of future arrival of agents. Also note that the above reasoning does not carry over to rational online MAPF algorithms in PLAN-ALL.

\subsection{Rational Algorithms in PLAN-ALL}

We now construct an online MAPF instance on a 4-neighbor 2D grid for which all rational algorithms in PLAN-ALL achieve a constant competitive ratio.

\begin{thm}\label{thm:all_flowtime_lb}
	There exists an online MAPF instance on a 4-neighbor 2D grid for which any rational online MAPF algorithm achieves a competitive ratio of at least 4/3 with respect to flowtime.
\end{thm}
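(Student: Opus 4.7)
The plan is to construct a minimal adversarial instance on the 2-by-2 grid of Figure \ref{fig:all_flowtime_lb} where the rationality constraint forces the online algorithm to make an unrevisable commitment at $t=0$ that delays a later-revealed agent in a way the offline optimum can avoid. I would take $a_1$ with start $v_1$, goal $v_4$, and release time $0$; and $a_2$ with release time $1$, goal $v_4$, and start vertex chosen adversarially to lie on the algorithm's committed path for $a_1$.

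First, I would apply Definition \ref{dfn:rationality} at $t=0$: with only $a_1$ revealed, the flowtime bound $|\mathcal A_{\leq 1}| \cdot \mathit{dist}_1 = 2$ equals the minimum feasible service time, so $a_1$ must start at $t=0$ and traverse a shortest path of length $2$ without waiting. The two such paths, $v_1 \to v_2 \to v_4$ and $v_1 \to v_3 \to v_4$, are interchanged by the grid automorphism $v_2 \leftrightarrow v_3$ and so are indistinguishable when only $a_1$ is known. Any rational algorithm must commit to one; by this symmetry the adversary can choose $a_2$'s start vertex to match whichever side the algorithm picks, so WLOG the algorithm picks $v_1 \to v_2 \to v_4$ and the adversary sets $s_2 = v_2$.

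Second, I would compute the two costs. Because $a_1$'s past position at $v_2$ at $t=1$ is fixed even under PLAN-ALL rerouting, $a_2$ cannot enter the graph at $t=1$ without a vertex collision with $a_1$; its earliest feasible start is $t=2$, yielding service time at least $2$ and online flowtime at least $2+2=4$. With full foreknowledge, however, the offline optimum routes $a_1$ via $v_3$ (i.e., $v_1 \to v_3 \to v_4$), leaving $v_2$ unoccupied at $t=1$; then $a_2$ enters $v_2$ at $t=1$ and reaches $v_4$ at $t=2$, so both agents arrive at $v_4$ simultaneously (which is permitted since arriving agents are removed and are exempt from vertex collisions at their arrival time). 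The offline flowtime is $2+1=3$, giving the desired ratio of $4/3$.

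The main obstacle will be the ``WLOG by symmetry'' step: a deterministic rational algorithm's tie-breaking between the two symmetric shortest paths for $a_1$ determines which side is vulnerable, and the adversary's choice of $s_2$ must be coupled to the algorithm's tie-breaking so that the $4/3$ analysis applies uniformly to every rational algorithm. The grid's symmetry makes this coupling natural, but it has to be spelled out carefully so that the construction truly witnesses a competitive ratio of at least $4/3$ for every rational algorithm, not merely for those whose tie-breaking happens to align with a fixed $a_2$.
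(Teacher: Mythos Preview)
Your proposal is correct and follows essentially the same adversarial construction as the paper: the same $2\times 2$ grid, the same agent $a_1$, and the same idea of placing $a_2$'s start on whichever intermediate vertex $a_1$ committed to, yielding online flowtime $4$ versus offline flowtime $3$. The only difference is that the paper sets $g_2=v_1$ rather than $g_2=v_4$, which avoids having to invoke the goal-vertex collision exemption in the optimal plan; your variant is still valid under the paper's collision rules, but the paper's choice makes the offline-feasibility check slightly cleaner.
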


%\begin{figure}\centering
%	\begin{tikzpicture}[scale=0.5]
%	\draw[thick, step=1cm] (-1,-1) grid (1,1);
%%	\node at (-0.25,0.25) {$s_1$};
%%	\node at (0.75,-0.75) {$g_1$};
%%	\node at (0.25,0.75) {$s_2$};
%%	\node at (-0.75,+0.75) {$g_2$};	
%	\node at (-0.5,0.5) {$v_1$};
%	\node at (0.5,0.5) {$v_2$};
%	\node at (-0.5,-0.5) {$v_3$};
%	\node at (0.5,-0.5) {$v_4$};	
%	\end{tikzpicture}
%	\caption{Online MAPF instance used for Theorem \ref{thm:all_flowtime_lb}.}\label{fig:all_flowtime_lb}
%\end{figure}

%\begin{figure}
%	\centering
%		\begin{tikzpicture}[scale=0.5]
%			\draw[thick, step=1cm] (-1,-1) grid (1,1);
%			%	\node at (-0.25,0.25) {$s_1$};
%			%	\node at (0.75,-0.75) {$g_1$};
%			%	\node at (0.25,0.75) {$s_2$};
%			%	\node at (-0.75,+0.75) {$g_2$};	
%			\node at (-0.5,0.5) {$v_1$};
%			\node at (0.5,0.5) {$v_2$};
%			\node at (-0.5,-0.5) {$v_3$};
%			\node at (0.5,-0.5) {$v_4$};	
%		\end{tikzpicture}
%	\caption{Online MAPF instance used for Theorem \ref{thm:all_flowtime_lb}.}\label{fig:all_flowtime_lb}
%\end{figure}

\begin{proof}
	Consider an online MAPF instance on the 4-neighbor 2D grid shown in Figure \ref{fig:all_flowtime_lb} where two agents $a_1$ and $a_2$ are given. Agent $a_1$ has start vertex $v_1$, goal vertex $v_4$, and release time 0. Consider an arbitrary rational online MAPF algorithm. At time step 0, the algorithm computes one of the two possible paths for agent $a_1$ where the agent starts at time step 0, moves without waiting, and arrives at $g_1 = v_4$ at time step 2 according to Definition \ref{dfn:rationality}. If agent $a_1$ chooses to go through vertex $v_2$, we construct agent $a_2$ with start vertex $v_2$, goal vertex $v_1$, and release time 1. Otherwise (agent $a_1$ chooses to go through vertex $v_3$), we construct agent $a_2$ with start vertex $v_3$, goal vertex $v_1$, and release time 1. In either case, agent $a_2$ starts at time step 2. The resulting flowtime is 4. The optimal plan has flowtime 3. The theorem thus follows.
\end{proof}

\begin{thm}\label{thm:all_makespan_lb}
	There exists an online MAPF instance on a 4-neighbor 2D grid for which any rational online MAPF algorithm achieves a competitive ratio of at least 3/2 with respect to makespan .
\end{thm}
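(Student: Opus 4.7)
The plan is to recycle the online MAPF instance and adversarial strategy from the proof of Theorem \ref{thm:all_flowtime_lb} essentially verbatim, only recomputing the objective values for makespan instead of flowtime. Take the $2\times 2$ grid of Figure \ref{fig:all_flowtime_lb} with initial agent $a_1$ having start vertex $v_1$, goal vertex $v_4$, and release time $0$, so that $\mathit{dist}_1 = 2$.

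First I would apply Definition \ref{dfn:rationality} at release time $0$: the plan for $\{a_1\}$ must have makespan at most $r_1 + \mathit{dist}_1 = 2$. This forces any rational algorithm to commit $a_1$ to one of the two length-two shortest paths moving without waiting, so $a_1$ occupies either $v_2$ or $v_3$ at time step $1$ and arrives at $v_4$ at time step $2$. The adversary then reveals $a_2$ at release time $1$ with goal vertex $v_1$ and with start vertex equal to whichever of $v_2, v_3$ is occupied by $a_1$ at time step $1$. Since that start vertex is blocked by $a_1$ at time step $1$, agent $a_2$ cannot enter the graph until time step $2$ at the earliest, and therefore arrives at $v_1$ no earlier than time step $3$, giving makespan at least $3$.

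The step that needs care, and the main obstacle, is ruling out clever PLAN-ALL replans at release time $1$ that try to reroute $a_1$ in order to free the shared intermediate vertex sooner. I would dispose of this with a short case check: $a_1$'s position at time step $1$ is already fixed by the move executed at time step $0$, and any alternative continuation from that position (waiting, or backtracking toward $v_1$ and then looping around) strictly increases $a_1$'s own arrival time at $v_4$ while still leaving the shared vertex occupied at time step $1$, so $a_2$ still cannot enter before time step $2$. Comparing with the optimal offline plan, which knows $a_2$ at time $0$ and routes $a_1$ through the other intermediate vertex (so $a_1$ arrives at $v_4$ at time step $2$) while $a_2$ enters at its release time $1$ and arrives at $v_1$ at time step $2$ — optimal makespan $2$ — yields the ratio $3/2$, and the theorem follows.
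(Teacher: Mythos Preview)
Your proposal is correct and follows essentially the same approach as the paper: the paper's proof simply points back to the instance and argument of Theorem~\ref{thm:all_flowtime_lb}, notes that any rational algorithm yields makespan~$3$ while the optimal plan has makespan~$2$, and concludes. Your added case check ruling out helpful PLAN-ALL reroutes at release time~$1$ is sound and merely makes explicit a detail the paper leaves implicit.
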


\begin{proof}
	Using the same online MAPF instance and argument as in the proof of Theorem \ref{thm:all_flowtime_lb}, any rational online MAPF algorithm results in makespan 3. The optimal plan has makespan 2. The theorem thus follows.
\end{proof}

%\begin{thm}
%	There exists an online MAPF instance on a 4-neighbor 2D grid for which any rational online MAPF algorithm achieves infinite competitive ratio with respect to the latency objective.
%\end{thm}

\subsection{Infinite Competitive Ratio for Latency}\label{sec:infinite_latency}

We have not performed any analysis for latency so far since we have the following observation from the above example.

\begin{obs}\label{obs:latency}
	All rational online MAPF algorithms have infinite competitive ratio with respect to latency even on 4-neighbor 2D grids.
\end{obs}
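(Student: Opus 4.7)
The plan is to re-use the two-agent $2\times 2$ instance from the proof of Theorem~\ref{thm:all_flowtime_lb} as a gadget and chain many temporally-separated copies of it, so that the latency incurred by any rational algorithm grows without bound while the optimal latency stays at $0$. On the single gadget, both agents traverse their shortest paths in the algorithm's plan but $a_2$ is forced to wait one time step for $a_1$ to clear $v_2$ or $v_3$, giving latency $(2-0-2)+(3-1-1)=1$. The offline optimum instead routes $a_1$ through whichever of $v_2,v_3$ is \emph{not} $a_2$'s start vertex, so that both agents achieve service time equal to their $\mathit{dist}_i$ and the optimal latency is exactly $0$.

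To amplify this gap I would construct, for every integer $n\ge 1$, an instance $\sigma_n$ on a 4-neighbor 2D grid obtained by placing $n$ disjoint $2\times 2$ blocks along a horizontal strip, separated by buffer columns of unused cells so that the ambient graph remains a 4-neighbor 2D grid and the blocks do not share vertices. The two agents of the $k$-th block are assigned release times $r_{2k-1}=(k-1)T$ and $r_{2k}=(k-1)T+1$, with $T$ chosen large enough (say $T=10$) that every agent of block $k-1$ has already reached its goal and been removed from the graph before block $k$'s first agent is revealed. The adversary picks the start vertex of $a_{2k}$ adaptively as in Theorem~\ref{thm:all_flowtime_lb}, based on the path the algorithm committed to for $a_{2k-1}$. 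Because the blocks are spatially disjoint and temporally isolated, the single-gadget argument repeats verbatim inside each block, yielding at least one unit of latency per block, for an algorithm total of at least $n$; the optimal plan attains $0$ on each block, for an optimal total of $0$.

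Putting these together gives infinite competitiveness: for any constants $\alpha$ and $\delta$, picking $n>\delta$ yields $C_\textit{ALG}(\sigma_n)\ge n > \alpha\cdot 0 + \delta = \alpha\, C_\textit{OPT}(\sigma_n)+\delta$, contradicting $\alpha$-competitiveness with additive slack $\delta$. The main obstacle is establishing true independence of the blocks for rational algorithms in PLAN-ALL, where previously-revealed agents may in principle be rerouted; I would address this by noting that once the agents of block $k-1$ have disappeared from the graph their paths are frozen, so with $T$ chosen larger than the (constant) makespan any rational algorithm can produce on a single block, the per-block analysis of Theorem~\ref{thm:all_flowtime_lb} carries over unchanged. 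The flowtime and makespan bounds in Definition~\ref{dfn:rationality} remain trivially satisfied across blocks because those bounds grow with the set of revealed agents while each new block only contributes a constant amount of service time.
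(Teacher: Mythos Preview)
Your argument is correct and in fact more careful than the paper's. The paper simply reuses the single $2\times 2$ instance from Theorem~\ref{thm:all_flowtime_lb}: there any rational algorithm incurs latency $1$ while the optimal latency is $0$, and it declares the ratio infinite on that basis alone. Your chaining of $n$ temporally separated copies is unnecessary under that informal reading, but it is precisely what is needed to handle the additive slack $\delta$ in the paper's own definition of competitive ratio---a single instance with optimum $0$ and algorithm cost $1$ is absorbed by any $\delta\ge 1$, whereas your family $\{\sigma_n\}$ defeats every fixed pair $(\alpha,\delta)$. So your route buys full rigor against the stated definition, at the cost of some extra bookkeeping.

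One detail to tighten: the problem assumes $G$ is connected, so the blocks cannot be graph-theoretically disjoint; they must sit inside one connected grid (your ``buffer columns'' should link them, not isolate them). This is harmless for the forcing step, since even in the larger grid the only length-$2$ paths from $s_{2k-1}$ to $g_{2k-1}$ still pass through the two interior cells of block $k$, and your choice of $T$ large relative to the per-block distance sum makes $r_{n_k}$ in Definition~\ref{dfn:rationality} reset to $r_{2k-1}$, so the makespan bound $r_{2k-1}+\mathit{dist}_{2k-1}=r_{2k-1}+2$ again pins $a_{2k-1}$ to one of those two shortest paths and the adversary can proceed exactly as in Theorem~\ref{thm:all_flowtime_lb}.
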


\begin{proof}[Reason]
	Using the same online MAPF instance and argument as in the proof of Theorem \ref{thm:all_flowtime_lb}, any rational online MAPF algorithm results in latency 1. The optimal plan has latency 0. Therefore, the competitive ratio of any rational online MAPF algorithm is (at least) infinite with respect to latency even on 4-neighbor 2D grids.
\end{proof}

\section{Conclusions}\label{sec:conclusion}

We conducted a theoretical study of online MAPF for the first time.
%Table \ref{tab:summary} summarizes the main results.
Our results suggest that,
%all rational algorithms, including the optimally-rational ones, are asymptotically no more effective than the naive algorithm SEQUENCE
if rerouting is disallowed, then planning for multiple agents is asymptotically (only) as effective as planning for one agent at a time and
acting optimally rationally is asymptotically (only) as effective as acting rationally, which is also asymptotically (only) as effective as following the naive algorithm SEQUENCE. However, allowing rerouting can potentially result in high effectiveness, as indicated by the gap between the competitive ratio upper and lower bounds.

Future work includes (1) developing a rational online MAPF algorithm in PLAN-ALL that achieves the current competitive ratio lower bound (thus proving that the bound is tight for it) or tightening the bounds further and (2) analyzing the online MAPF variant where probabilistic models of future arrivals of agents are given.

%\section*{Acknowledgments}
%This work was supported by the Natural Sciences and Engineering Research Council (NSERC) under grant number RGPIN-2020-06540.

\section*{Appendix: Proof of Theorem \ref{thm:makespan_hardness}}
%The proof of Theorem \ref{thm:makespan_hardness} is as follows.
\begin{proof}%[Proof of Theorem \ref{thm:makespan_hardness}]
	%We follow the reduction used in the proof of Theorem 3 in \citet{MaAAAI16} to construct an online MAPF instance that has a solution with makespan three if and only if a given $\le$3,$=$3-SAT instance is satisfiable.
	Similar to the proof of Theorem 3 in \citet{MaAAAI16}, we construct an online MAPF instance that has a solution with makespan three if and only if a given $\le$3,$=$3-SAT instance is satisfiable.	
	For each variable $X_i$ in the $\le$3,$=$3-SAT instance, we construct two	``literal'' agents, $a_{iT}$ and $a_{iF}$, with start vertices $s_{iT}$ and $s_{iF}$ and goal vertices $t_{iT}$ and $t_{iF}$, respectively. All literal agents have release time zero. For each literal agent, we construct two paths to get to its goal vertex in three time steps: a ``shared'' path, namely $\langle	s_{iT},u_{iT},v_i, t_{iT}\rangle$ for $a_{iT}$ and $\langle
	s_{iF},u_{iF},v_i, t_{iF}\rangle$ for $a_{iF}$, and a ``private'' path,
	namely $\langle s_{iT},w_{iT},x_{iT}, t_{iT}\rangle$ for $a_{iT}$ and
	$\langle s_{iF},w_{iF},x_{iF}, t_{iF}\rangle$ for $a_{iF}$. The shared paths
	for $a_{iT}$ and $a_{iF}$ intersect at vertex $v_i$. Only one of the two
	paths can thus be used if a makespan of three is to be achieved. Sending
	literal agent $a_{iT}$ (or $a_{iF}$) along the shared path corresponds to
	assigning \emph{True} (or \emph{False}) to $X_i$ in the $\le$3,$=$3-SAT instance.		
	For each clause $C_j$ in the $\le$3,$=$3-SAT instance, we construct a
	``clause'' agent $a_j$ with start vertex $c_j$, goal vertex
	$d_j$, and release time zero. It has multiple (but at most three) ``clause'' paths to get to its
	goal vertex in three time steps, which have a one-to-one
	correspondence to the literals in $C_j$. Every literal $X_i$ (or
	$\overline{X_i}$) can appear in at most two clauses. If $C_j$ is the first
	clause that it appears in, then the clause path is $\langle c_j, w_{iT},
	b_j, d_j\rangle$ (or $\langle c_j, w_{iF}, b_j, d_j\rangle$). If $C_j$ is
	the second clause that it appears in, a vertex $\alpha_j$ is introduced and the
	clause path is instead $\langle c_j, \alpha_j, x_{iT}, d_j\rangle$ (or $\langle
	c_j, \alpha_j, x_{iF}, d_j\rangle$). The clause path of each $C_j$ with respect
	to any literal in that clause and the private path of the literal
	intersect. Only one of the two paths can thus be used if a makespan of three
	is to be achieved.	
%	\begin{figure}[t]
%		\centering
%		\includegraphics[width=1.00\columnwidth]{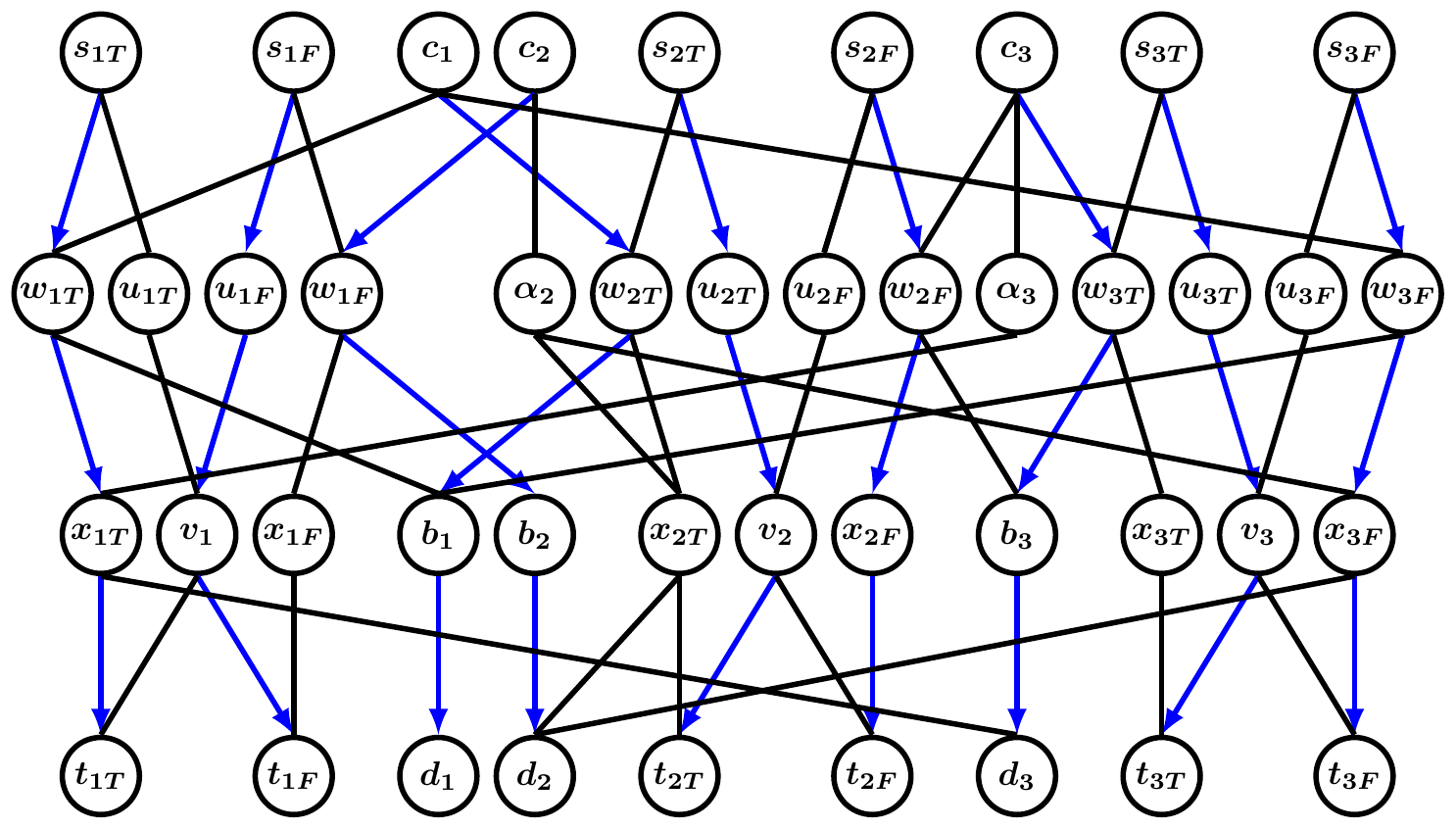}\\
%		\caption{An online MAPF instance reduced from the $\le$3,$=$3-SAT instance $(X_1
%			\vee X_2 \vee \overline{X}_3) \wedge (\overline{X}_1 \vee X_2 \vee
%			\overline{X}_3) \wedge (X_1 \vee \overline{X}_2 \vee X_3)$. Clause $C_1$
%			is the first clause that literal $X_1$ appears in. The corresponding
%			clause path is $\langle c_1, w_{1T}, b_1, d_1 \rangle$.  Since clause
%			$C_2$ is the second clause that $X_2$ appears in, vertex $\alpha_2$ is
%			introduced. The corresponding clause path is $\langle c_2, \alpha_2, x_{2T},
%			d_2 \rangle$. The blue (directed) edges represent one optimal solution to the online MAPF instance of makespan three, which corresponds to the satisfying
%			assignment $(X_1,X_2,X_3) = ($\emph{False},\emph{True},\emph{True}$)$.}\label{fig:PERR_hardness}
%	\end{figure}
%	Figure~\ref{fig:PERR_hardness} shows an example of the construction.	
	A visualized example of the reduction can be found in \citet{MaAAAI16}.	
	Suppose that a satisfying assignment to the $\le$3,$=$3-SAT instance
	exists. Then, a solution with makespan three is obtained by sending literal
	agents of true literals along their shared paths, the other literal
	agents along their private paths, and clause agents along the clause
	paths corresponding to one of the true literals in those clauses.	
	Conversely, suppose that a solution with makespan three exists. Then, each
	clause agent traverses the clause path corresponding to one of the
	literals in that clause, and the corresponding literal agent traverses its
	shared path. Since the agents of a literal and its complement cannot both
	use their shared path if a makespan of three is to be achieved, we can
	assign \emph{True} to every literal whose agent uses its shared path
	without assigning \emph{True} to both the uncomplemented and complemented
	literals. If the agents of both literals use their private paths, we can
	assign \emph{True} to any one of the literals and \emph{False} to the other
	one. A solution to the online MAPF instance with makespan three thus yields a
	satisfying assignment to the $\le$3,$=$3-SAT instance.	
	
	To summarize, the online MAPF instance has a solution with makespan three if and
	only if the $\le$3,$=$3-SAT instance is satisfiable. Also, the online MAPF instance
	cannot have a solution with makespan smaller than three and always has a
	solution with makespan four, even if the $\le$3,$=$3-SAT instance is
	unsatisfiable. For any $\epsilon>0$, any approximation algorithm for online MAPF
	with ratio $4/3 - \epsilon$ thus computes a solution with makespan three
	whenever the $\le$3,$=$3-SAT instance is satisfiable and therefore solves
	$\le$3,$=$3-SAT.	
\end{proof}

%\section*{Acknowledgments}
%This work was supported by the Natural Sciences and Engineering Research Council (NSERC) under grant number RGPIN-2020-06540.

%\newpage
	\small
	\bibliography{references}

\begin{thebibliography}{39}
\providecommand{\natexlab}[1]{#1}
\providecommand{\url}[1]{\texttt{#1}}
\providecommand{\urlprefix}{URL }
\expandafter\ifx\csname urlstyle\endcsname\relax
  \providecommand{\doi}[1]{doi:\discretionary{}{}{}#1}\else
  \providecommand{\doi}{doi:\discretionary{}{}{}\begingroup
  \urlstyle{rm}\Url}\fi

\bibitem[{Banfi, Basilico, and Amigoni(2017)}]{banfi2017intractability}
Banfi, J.; Basilico, N.; and Amigoni, F. 2017.
\newblock Intractability of time-optimal multirobot path planning on 2d grid
  graphs with holes.
\newblock \emph{IEEE Robotics and Automation Letters} 2(4): 1941--1947.

\bibitem[{Borodin and El-Yaniv(2005)}]{borodin2005online}
Borodin, A.; and El-Yaniv, R. 2005.
\newblock \emph{Online Computation and Competitive Analysis}.
\newblock Cambridge University Press.

\bibitem[{Boyarski et~al.(2015)Boyarski, Felner, Stern, Sharon, Tolpin,
  Betzalel, and Shimony}]{ICBS}
Boyarski, E.; Felner, A.; Stern, R.; Sharon, G.; Tolpin, D.; Betzalel, O.; and
  Shimony, S.~E. 2015.
\newblock {ICBS}: Improved Conflict-Based Search Algorithm for Multi-Agent
  Pathfinding.
\newblock In \emph{IJCAI}, 740--746.

\bibitem[{Cohen et~al.(2018)Cohen, Greco, Ma, Hernandez, Felner, Kumar, and
  Koenig}]{CohenIJCAI18}
Cohen, L.; Greco, M.; Ma, H.; Hernandez, C.; Felner, A.; Kumar, T. K.~S.; and
  Koenig, S. 2018.
\newblock Anytime Focal Search with Applications.
\newblock In \emph{IJCAI}, 1434--1441.

\bibitem[{Das et~al.(2018)Das, Gollapudi, Kim, Panigrahi, and
  Swamy}]{das2018minimizing}
Das, A.; Gollapudi, S.; Kim, A.; Panigrahi, D.; and Swamy, C. 2018.
\newblock Minimizing latency in online ride and delivery services.
\newblock In \emph{WWW}, 379--388.

\bibitem[{Dresner and Stone(2008)}]{dresner2008multiagent}
Dresner, K.; and Stone, P. 2008.
\newblock A Multiagent Approach to Autonomous Intersection Management.
\newblock \emph{Journal of Artificial Intelligence Research} 31: 591--656.

\bibitem[{Erdem et~al.(2013)Erdem, Kisa, Oztok, and
  Schueller}]{erdem2013general}
Erdem, E.; Kisa, D.~G.; Oztok, U.; and Schueller, P. 2013.
\newblock A General Formal Framework for Pathfinding Problems with Multiple
  Agents.
\newblock In \emph{AAAI}, 290--296.

\bibitem[{Gange, Harabor, and Stuckey(2019)}]{GangeHS19}
Gange, G.; Harabor, D.; and Stuckey, P.~J. 2019.
\newblock Lazy {CBS}: Implicit Conflict-Based Search Using Lazy Clause
  Generation.
\newblock In \emph{ICAPS}, 155--162.

\bibitem[{Ho et~al.(2019)Ho, Salta, Geraldes, Goncalves, Cavazza, and
  Prendinger}]{ho2019multi}
Ho, F.; Salta, A.; Geraldes, R.; Goncalves, A.; Cavazza, M.; and Prendinger, H.
  2019.
\newblock Multi-agent Path Finding for UAV Traffic Management.
\newblock In \emph{AAMAS}, 131--139.

\bibitem[{Kou et~al.(2020)Kou, Peng, Ma, Kumar, and Koenig}]{kou2020idle}
Kou, N.~M.; Peng, C.; Ma, H.; Kumar, T.~S.; and Koenig, S. 2020.
\newblock Idle time optimization for target assignment and path finding in
  sortation centers.
\newblock In \emph{AAAI}, volume~34, 9925--9932.

\bibitem[{Lam et~al.(2019)Lam, Le~Bodic, Harabor, and Stuckey}]{LamBHS19}
Lam, E.; Le~Bodic, P.; Harabor, D.; and Stuckey, P.~J. 2019.
\newblock Branch-and-Cut-and-Price for Multi-Agent Pathfinding.
\newblock In \emph{IJCAI}, 1289--1296.

\bibitem[{Li et~al.(2021)Li, Chen, Zheng, Chan, Harabor, Stuckey, Ma, and
  Koenig}]{LiICAPS21}
Li, J.; Chen, Z.; Zheng, Y.; Chan, S.-H.; Harabor, D.; Stuckey, P.~J.; Ma, H.;
  and Koenig, S. 2021.
\newblock Scalable Rail Planning and Replanning: Winning the 2020 Flatland
  Challenge.
\newblock In \emph{ICAPS}.

\bibitem[{Li et~al.(2019{\natexlab{a}})Li, Felner, Boyarski, Ma, and
  Koenig}]{LiIJCAI19}
Li, J.; Felner, A.; Boyarski, E.; Ma, H.; and Koenig, S. 2019{\natexlab{a}}.
\newblock Improved Heuristics for Multi-Agent Path Finding with Conflict-Based
  Search.
\newblock In \emph{IJCAI}, 442--449.

\bibitem[{Li et~al.(2020)Li, Gange, Harabor, Stuckey, Ma, and
  Koenig}]{LiICAPS20}
Li, J.; Gange, G.; Harabor, D.; Stuckey, P.~J.; Ma, H.; and Koenig, S. 2020.
\newblock New Techniques for Pairwise Symmetry Breaking in Multi-Agent Path
  Finding.
\newblock In \emph{ICAPS}, 193--201.

\bibitem[{Li et~al.(2019{\natexlab{b}})Li, Harabor, Stuckey, Felner, Ma, and
  Koenig}]{LiICAPS19}
Li, J.; Harabor, D.; Stuckey, P.~J.; Felner, A.; Ma, H.; and Koenig, S.
  2019{\natexlab{b}}.
\newblock Disjoint Splitting for Conflict-Based Search for Multi-Agent Path
  Finding.
\newblock In \emph{ICAPS}, 279--283.

\bibitem[{Luna and Bekris(2011)}]{PushAndSwap}
Luna, R.; and Bekris, K.~E. 2011.
\newblock {Push and Swap}: Fast Cooperative Path-finding with Completeness
  Guarantees.
\newblock In \emph{IJCAI}, 294--300.

\bibitem[{Ma et~al.(2019{\natexlab{a}})Ma, Harabor, Stuckey, Li, and
  Koenig}]{MaAAAI19a}
Ma, H.; Harabor, D.; Stuckey, P.~J.; Li, J.; and Koenig, S. 2019{\natexlab{a}}.
\newblock Searching with Consistent Prioritization for Multi-Agent Path
  Finding.
\newblock In \emph{AAAI}, 7643--7650.

\bibitem[{Ma et~al.(2019{\natexlab{b}})Ma, H\"{o}nig, Kumar, Ayanian, and
  Koenig}]{MaAAAI19b}
Ma, H.; H\"{o}nig, W.; Kumar, T. K.~S.; Ayanian, N.; and Koenig, S.
  2019{\natexlab{b}}.
\newblock Lifelong Path Planning with Kinematic Constraints for Multi-Agent
  Pickup and Delivery.
\newblock In \emph{AAAI}, 7651--7658.

\bibitem[{Ma and Koenig(2017)}]{MaAIMATTERS17}
Ma, H.; and Koenig, S. 2017.
\newblock {AI} Buzzwords Explained: Multi-Agent Path Finding ({MAPF}).
\newblock \emph{AI Matters} 3(3): 15--19.

\bibitem[{Ma et~al.(2017{\natexlab{a}})Ma, Li, Kumar, and Koenig}]{MaAAMAS17}
Ma, H.; Li, J.; Kumar, T. K.~S.; and Koenig, S. 2017{\natexlab{a}}.
\newblock Lifelong Multi-Agent Path Finding for Online Pickup and Delivery
  Tasks.
\newblock In \emph{AAMAS}, 837--845.

\bibitem[{Ma et~al.(2016)Ma, Tovey, Sharon, Kumar, and Koenig}]{MaAAAI16}
Ma, H.; Tovey, C.; Sharon, G.; Kumar, T. K.~S.; and Koenig, S. 2016.
\newblock Multi-Agent Path Finding with Payload Transfers and the
  Package-Exchange Robot-Routing Problem.
\newblock In \emph{AAAI}, 3166--3173.

\bibitem[{Ma et~al.(2018)Ma, Wagner, Felner, Li, Kumar, and Koenig}]{MaAAMAS18}
Ma, H.; Wagner, G.; Felner, A.; Li, J.; Kumar, T. K.~S.; and Koenig, S. 2018.
\newblock Multi-Agent Path Finding with Deadlines: Preliminary Results.
\newblock In \emph{AAMAS}, 2004--2006.

\bibitem[{Ma et~al.(2017{\natexlab{b}})Ma, Yang, Cohen, Kumar, and
  Koenig}]{MaAIIDE17}
Ma, H.; Yang, J.; Cohen, L.; Kumar, T. K.~S.; and Koenig, S.
  2017{\natexlab{b}}.
\newblock Feasibility Study: Moving Non-Homogeneous Teams in Congested Video
  Game Environments.
\newblock In \emph{AIIDE}, 270--272.

\bibitem[{Mohanty et~al.(2020)Mohanty, Nygren, Eichenberger, Baumberger, Egli,
  Spigler, Watson, Laurent, Scheller, Bhattacharya, Sartoretti, Sturm, and
  Koenig}]{flatland2020}
Mohanty, S.; Nygren, E.; Eichenberger, C.; Baumberger, C.; Egli, A.; Spigler,
  G.; Watson, J.; Laurent, F.; Scheller, C.; Bhattacharya, N.; Sartoretti, G.;
  Sturm, I.; and Koenig, S. 2020.
\newblock FLATLAND {NeurIPS} 2020 Competition: Multi-Agent Reinforcement
  Learning on Trains.
\newblock Online.

\bibitem[{Phillips and Likhachev(2011)}]{SIPP}
Phillips, M.; and Likhachev, M. 2011.
\newblock {SIPP}: Safe Interval Path Planning for Dynamic Environments.
\newblock In \emph{ICRA}, 5628--5635.

\bibitem[{Salzman and Stern(2020)}]{salzman2020research}
Salzman, O.; and Stern, R. 2020.
\newblock Research Challenges and Opportunities in Multi-Agent Path Finding and
  Multi-Agent Pickup and Delivery Problems.
\newblock In \emph{AAMAS}, 1711--1715.

\bibitem[{Sharon et~al.(2015)Sharon, Stern, Felner, and
  Sturtevant}]{DBLP:journals/ai/SharonSFS15}
Sharon, G.; Stern, R.; Felner, A.; and Sturtevant, N. 2015.
\newblock Conflict-based search for optimal multi-agent pathfinding.
\newblock \emph{Artificial Intelligence} 219: 40--66.

\bibitem[{Sharon et~al.(2013)Sharon, Stern, Goldenberg, and
  Felner}]{DBLP:journals/ai/SharonSGF13}
Sharon, G.; Stern, R.; Goldenberg, M.; and Felner, A. 2013.
\newblock The increasing cost tree search for optimal multi-agent pathfinding.
\newblock \emph{Artificial Intelligence} 195: 470--495.

\bibitem[{Silver(2005)}]{WHCA}
Silver, D. 2005.
\newblock Cooperative Pathfinding.
\newblock In \emph{AIIDE}, 117--122.

\bibitem[{Stern et~al.(2019)Stern, Sturtevant, Felner, Koenig, Ma, Walker, Li,
  Atzmon, Cohen, Kumar, Boyarski, and Bartak}]{SternSOCS19}
Stern, R.; Sturtevant, N.; Felner, A.; Koenig, S.; Ma, H.; Walker, T.; Li, J.;
  Atzmon, D.; Cohen, L.; Kumar, T. K.~S.; Boyarski, E.; and Bartak, R. 2019.
\newblock Multi-Agent Pathfinding: Definitions, Variants, and Benchmarks.
\newblock In \emph{SoCS}, 151--159.

\bibitem[{Surynek(2010)}]{surynek2010optimization}
Surynek, P. 2010.
\newblock An optimization variant of multi-robot path planning is intractable.
\newblock In \emph{AAAI}, 1261--1263.

\bibitem[{Surynek et~al.(2016)Surynek, Felner, Stern, and
  Boyarski}]{DBLP:conf/ecai/SurynekFSB16}
Surynek, P.; Felner, A.; Stern, R.; and Boyarski, E. 2016.
\newblock Efficient {SAT} Approach to Multi-Agent Path Finding Under the Sum of
  Costs Objective.
\newblock In \emph{ECAI}, 810--818.

\bibitem[{{\v{S}}vancara et~al.(2019){\v{S}}vancara, Vlk, Stern, Atzmon, and
  Bart{\'a}k}]{vsvancara2019online}
{\v{S}}vancara, J.; Vlk, M.; Stern, R.; Atzmon, D.; and Bart{\'a}k, R. 2019.
\newblock Online Multi-Agent Pathfinding.
\newblock In \emph{AAAI}, 7732--7739.

\bibitem[{Tovey(1984)}]{cat1984}
Tovey, C.~A. 1984.
\newblock A Simplified {NP}-Complete Satisfiability Problem.
\newblock \emph{Discrete Applied Mathematics} 8: 85--90.

\bibitem[{Wang and Botea(2011)}]{Wang11}
Wang, K.; and Botea, A. 2011.
\newblock {MAPP}: A Scalable Multi-Agent Path Planning Algorithm with
  Tractability and Completeness Guarantees.
\newblock \emph{Journal of Artificial Intelligence Research} 42: 55--90.

\bibitem[{Wurman, D'Andrea, and Mountz(2008)}]{kiva}
Wurman, P.~R.; D'Andrea, R.; and Mountz, M. 2008.
\newblock Coordinating Hundreds of Cooperative, Autonomous Vehicles in
  Warehouses.
\newblock \emph{{AI} Magazine} 29(1): 9--20.

\bibitem[{Yu(2015)}]{yu2015intractability}
Yu, J. 2015.
\newblock Intractability of optimal multirobot path planning on planar graphs.
\newblock \emph{IEEE Robotics and Automation Letters} 1(1): 33--40.

\bibitem[{Yu and LaValle(2013{\natexlab{a}})}]{YuLav13ICRA}
Yu, J.; and LaValle, S.~M. 2013{\natexlab{a}}.
\newblock Planning Optimal Paths for Multiple Robots on Graphs.
\newblock In \emph{ICRA}, 3612--3617.

\bibitem[{Yu and LaValle(2013{\natexlab{b}})}]{YuLav13AAAI}
Yu, J.; and LaValle, S.~M. 2013{\natexlab{b}}.
\newblock Structure and Intractability of Optimal Multi-robot Path Planning on
  Graphs.
\newblock In \emph{AAAI}, 1444--1449.

\end{thebibliography}
	\bibliographystyle{aaai}
\end{document}